\newtheorem{proposition}{Proposition}
\newtheorem{lemma}{Lemma}
\definecolor{darkgreen}{HTML}{0c8918}
\definecolor{darkgrey}{rgb}{0.53,0.53,0.53}
\definecolor{mygrey}{rgb}{0.9,0.9,0.9}
\definecolor{darkblue}{rgb}{0.0, 0.17, 0.58}
\title{Consistent Paths Lead to Truth: Self-Rewarding Reinforcement Learning for LLM Reasoning}
\author{%
\textbf{Kongcheng Zhang}\textsuperscript{$1,2$},
\textbf{Qi Yao}\textsuperscript{$2$},
\textbf{Shunyu Liu}\textsuperscript{$3$ \faEnvelope[regular]},
\textbf{Yingjie Wang}\textsuperscript{$3$},\\
\textbf{Baisheng Lai}\textsuperscript{$2$},
\textbf{Jieping Ye}\textsuperscript{$2$},
\textbf{Mingli Song}\textsuperscript{$1$},
\textbf{Dacheng Tao}\textsuperscript{$3$} \\
\textsuperscript{$1$}Zhejiang University,
\textsuperscript{$2$}Alibaba Cloud Computing,
\textsuperscript{$3$} Nanyang Technological University \\
\texttt{zhangkc@zju.edu.cn},
\texttt{yq223369@alibaba-inc.com},
\texttt{shunyu.liu@ntu.edu.sg},\\
\texttt{yingjiewang1201@gmail.com},
\texttt{\{baisheng.lbs},
\texttt{yejieping.ye\}@alibaba-inc.com}\\
\texttt{brooksong@zju.edu.cn},
\texttt{dacheng.tao@gmail.com}
}
\begin{document}

\maketitle

\begingroup
\renewcommand\thefootnote{\faEnvelope[regular]} % 重定义脚注标记
\footnotetext{Corresponding author.}
\endgroup

\begin{abstract}
Recent advances of Reinforcement Learning~(RL) have highlighted its potential in complex reasoning tasks, yet effective training often relies on external supervision, which limits the broader applicability. In this work, we propose a novel self-rewarding reinforcement learning framework to enhance Large Language Model~(LLM) reasoning by leveraging the consistency of intermediate reasoning states across different reasoning trajectories. Our key insight is that correct responses often exhibit consistent trajectory patterns in terms of model likelihood: their intermediate reasoning states tend to converge toward their own final answers (\textit{high consistency}) with minimal deviation toward other candidates (\textit{low volatility}).
Inspired by this observation, we introduce CoVo, an intrinsic reward mechanism that integrates \textit{\underline{\textbf{Co}}nsistency} and \textit{\underline{\textbf{Vo}}latility} via a robust vector-space aggregation strategy, complemented by a curiosity bonus to promote diverse exploration. 
CoVo enables LLMs to perform RL in a self-rewarding manner, offering a scalable pathway for learning to reason without external supervision. 
% Moreover, our theoretical analysis reveals that CoVo in fact treats reasoning trajectories as latent variables within a variational inference objective, thereby enabling the joint optimization of both reasoning paths and final answers.
Extensive experiments on diverse reasoning benchmarks show that CoVo achieves performance comparable to or even surpassing supervised RL.%, boosting Qwen2.5-3B-Instruct from 63.6\% to 68.2\% on the MATH-500 dataset.
\begin{center}
\faGithub~\texttt{Code:} \url{https://github.com/sastpg/CoVo}
\end{center}

\end{abstract}

\section{Introduction}
Reinforcement Learning~(RL) has emerged as a promising paradigm for enhancing the reasoning capabilities of Large Language Models~(LLMs), particularly in complex tasks such as mathematical analysis~\citep{hendrycks2021measuring,cobbe2021training,patel2021nlp,he2024olympiadbench} and algorithmic programming~\citep{chen2021evaluating,austin2021program,zhuo2024bigcodebench}. Recently, DeepSeek-R1~\citep{guo2025deepseek} and its follow-up works~\citep{zeng2025simplerl,xie2025logic,yu2025dapo,hu2025orz,chen2025empirical,zhang2025reasoning} have demonstrated that RL with outcome-based rewards (\textit{i.e.}, those derived from verifiable answers~\citep{guo2025deepseek,lambert2024t} or reward models~\citep{Lyuoreal25, Ouyangrlhf22}) can incentivize LLMs to develop advanced reasoning capabilities, leading to the spontaneous emergence of complex ``aha moment'' behaviors. However, a critical bottleneck persists: existing RL works heavily rely on \textit{human-annotated labels} or \textit{pretrained reward models} to provide supervision signals. These requirements limit scalability, as high-quality labels are costly to obtain, and reward models often suffer from biases or distributional mismatches in open-ended reasoning scenarios~\citep{self-reward2024yuan,empo2025zhang,ttrl2025zuo,liu2025survey}.

To address this challenge, several recent works have explored the concept of \textit{self-rewarding}~\citep{self-reward2024yuan,metareward2024wu,psrlm2025zhang}, where the model estimates the correctness of its own answers without external supervision.
For instance, TTRL~\citep{ttrl2025zuo} uses majority voting over sampled responses to reward high-frequency answers, while EMPO~\cite{empo2025zhang} leverages the semantic clusters of answers as a proxy reward signal. However, these methods assign rewards solely based on final answers, overlooking the intermediate reasoning states of reasoning trajectories~\citep{latent2024wang,embedtraj2024wang}, which is often where errors originate or propagate.  
As a result, existing self-rewarding techniques for reinforcement learning are highly susceptible to reward hacking, where models may learn to exploit superficial shortcuts that yield unreliable high rewards without genuinely improving reasoning quality (\textit{e.g.}, directly outputting ``0'' for all math problems can deceive majority voting into rewarding it.). This leads us to a fundamental yet unresolved question:

\begin{tcolorbox}[notitle, rounded corners, colframe=darkgrey, colback=white, boxrule=2pt, boxsep=0pt, left=0.15cm, right=0.17cm, enhanced, shadow={2.5pt}{-2.5pt}{0pt}{opacity=5,mygrey},toprule=2pt, before skip=0.65em, after skip=0.75em]
\emph{
    {\centering {\textbf{Can intermediate reasoning states be used to derive self-supervised rewards that guide LLMs to generate correct reasoning trajectory without any external supervision?}}\\}
}
\end{tcolorbox}

\begin{figure}[t]
    \centering
    \includegraphics[width=\linewidth]{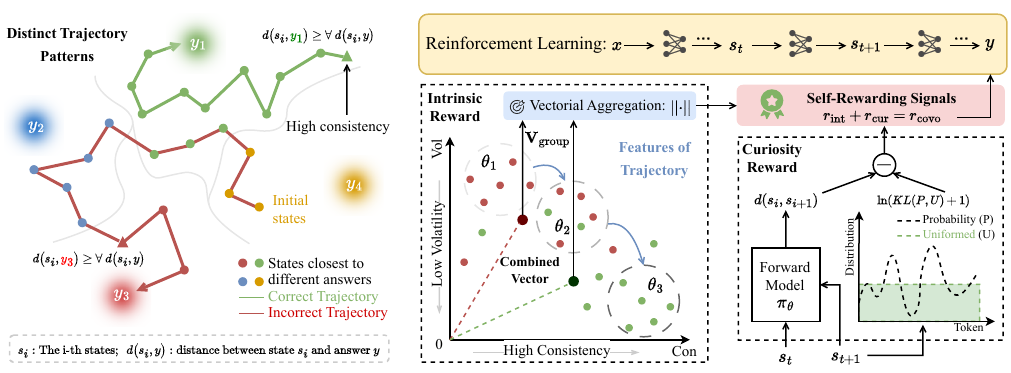}
    \vspace{-10pt}
    \caption{(Left) A conceptual illustration of distinct reasoning patterns: Correct trajectories consistently lead to their own final answer, while incorrect trajectories diverge to different solutions. Different colored nodes along the trajectories denote intermediate reasoning states that are closest to different candidate final answers. (Right) A self-rewarding method that comprises an intrinsic reward to distinguish answer correctness and a curiosity reward to promote diverse exploration.}
    \label{fig:framework}
    \vspace{-15pt}
\end{figure}

In this work, we conceptualize the reasoning trajectory of LLMs as a sequence of intermediate reasoning states with a final answer. To analyze these trajectories, we introduce a distance matrix using model likelihood between each intermediate state and different final answers. Our key finding is that correct and incorrect responses exhibit distinct trajectory patterns in terms of  \textit{consistency} (\textit{i.e.}, how many intermediate reasoning states lead to their own final answer) and \textit{volatility} (\textit{i.e.}, how far intermediate reasoning states deviate from their own final answer), as shown in Fig.~\ref{fig:framework}. Unlike prior self-rewarding works that rely only on final answers, this suggests that intermediate reasoning states can provide a rich internal signal for evaluating and guiding reasoning trajectories.

Therefore, we propose a novel self-rewarding RL framework, abbreviated as CoVo, which employs \textit{\underline{\textbf{Co}}nsistency} and \textit{\underline{\textbf{Vo}}latility} as intrinsic rewards to facilitate LLM reasoning. Technically, CoVo introduces a robust vector-space aggregation strategy that geometrically encodes consistency and volatility of reasoning trajectories, enabling effective reward estimation while reducing sensitivity to outlier trajectories. To further encourage diverse exploration, we augment CoVo with a curiosity bonus that incentivizes the model to traverse uncertain yet informative reasoning trajectories based on token probability distribution.
Moreover, our theoretical analysis reveals that CoVo in fact treats reasoning trajectories as latent variables within a variational inference objective, thereby grounding rewards on both reasoning paths and final answers.

Our core contributions are summarized as follows:

\begin{itemize}[leftmargin=*]
    \item We identify an interesting yet underexplored observation that correct responses from LLMs often exhibit \textit{consistent} trajectory patterns, indicating that intermediate reasoning states can serve as a reliable foundation for self-supervised reward construction.
    \item We propose CoVo, a novel self-rewarding reinforcement learning framework that integrates \textit{\underline{\textbf{Co}}nsistency} and \textit{\underline{\textbf{Vo}}latility} as intrinsic rewards with a curiosity-driven exploration bonus, enabling LLMs to optimize reasoning trajectories without relying on external labels or reward models.
    \item Extensive experiments on both math-specific and general-purpose benchmarks demonstrate that CoVo significantly improves the reasoning capabilities of LLMs, leading to performances on par with and even sometimes surpassing supervised RL with verifiable answers. 
\end{itemize}

\section{Preliminaries}
In this section, we provide a brief background on LLM reasoning and RL problem formulation.

\textbf{Intermediate reasoning states.} We begin by formalizing the concept of intermediate reasoning states and the final answer within a reasoning trajectory. Let $\pi_\theta$ represent a language model parameterized by $\theta$. Given an input prompt $x$ and a reasoning trajectory $\tau$ generated by $\pi_\theta$, we assume the reasoning trajectory can be divided into $T$ discrete steps $t_0, t_1, ..., t_{T-1}$ and a final answer $y$ using predefined rules (\textit{e.g.}, newline characters). We then denote the $i$-th intermediate reasoning state as $s_i = [x, t_0, t_1, ... , t_{i-1}]$. Following the framework established in \citet{zhou2025landscape}, we define \textbf{distance to final answer} $y$ at state $s_i$ as:
\begin{equation}
d(s_i, y) = -\frac{1}{|y|} \sum_{j=1}^{|y|} \log \pi_{\theta}(y[j] \mid s_i, y[:j]),
\end{equation}
where $y$ is the final answer of reasoning trajectory $\tau$, $|y|$ denotes its length. Suppose we sample $N$ reasoning trajectories for an input prompt $x$ and obtain $K$ distinct final answers $\{y_0, y_1, ..., y_{K-1}\}$, we then define \textbf{Distance Matrix} $\mathbf{D} \in \mathbb{R}^{T \times K}$ for a single trajectory as $\mathbf{D}[i,k] = d(s_i, y_k)$, where $0 \leq i < T, 0 \leq k < K$, and $\mathbf{D}[:,0]$ denotes the distance to its own final answer. This matrix captures the distances from each intermediate state within one reasoning trajectory to various potential answers generated by other trajectories. It thus provides a fine-grained representation that enables us to analyze how the intermediate reasoning states distinguish patterns of correct responses from erroneous ones. When sampling $N$ reasoning trajectories for a prompt, we will obtain $N$ corresponding distance matrices. In what follows, we focus on prompts where $\pi_\theta$ generates multiple final answers, as cases with all identical sampling answers typically lack meaningful learning signals.

\textbf{Reinforcement Learning for LLMs.} Let $\mathcal{V}$ be a finite vocabulary of tokens. Model $\pi_\theta$ receives an input prompt $x \in \mathcal{X}$ and produces a distribution over answers $y \in \mathcal{Y}$, where $\mathcal{X},\mathcal{Y} \subseteq \mathcal{V}^*$ are the sets of possible input prompts and output sequences, respectively. Following convention in the RLHF literature, we also refer to $\pi_\theta$ as a policy model. Given a reward model (or function) $r$, the policy $\pi_\theta$ is updated by policy gradient methods, such as PPO~\citep{SchulmanPPO17}, RLOO~\citep{rloo2019Kool}, GRPO~\citep{shao2024deepseekmath}, and Reinforce++~\citep{hu2025reinforce++}. The optimization object is:
\begin{equation}
\label{eq:RL}
\mathcal{J}_{\text{RL}}(\theta ; x) = \mathbb{E}_{y \sim \pi_{\theta}(\cdot | x)}\left[ r(x, y) - \lambda \text{KL} \left( {\pi_{\theta}(y | x)} \| {\pi_{\theta_{\text{ref}}}(y | x)} \right) \right],
\end{equation}
where $\pi_{\text{ref}}$ is the initial policy. Specifically, we adopt the Reinforce++~\citep{hu2025reinforce++} algorithm as the backbone in our paper, which combines clipped updates with normalized advantage values through the following loss function:
\begin{equation}
\mathcal{L}_{\text{RL}}(\theta)=\mathbb{E}_{t}\left[\min \left(\!\frac{\pi_\theta(a_i | s_i)}{ \pi_{\theta_{\text{old}}}(a_i | s_i)} \hat{A}_{i}, \text{clip}\left(\frac{\pi_\theta(a_i | s_i)}{ \pi_{\theta_{\text{old}}}(a_i | s_i)}, 1 - \epsilon, 1  + \epsilon\right)\hat{A}_{i}\!\right)  \right],
\end{equation}
where $\pi_{\theta_{\text{old}}}$ is the previous policy model, $s_i$ is the $i$-th intermediate reasoning states, $a_i \sim \pi_\theta(\cdot | s_i)$ is the selected next token, $\epsilon$ is the clipping coefficient, and the normalized advantage value $\hat{A}_{i}$ is calculated based on the reward $r$.

\section{Self-Rewarding Reinforcement Learning}
\label{sec:3}
In this section, we aim to elucidate our insight: \textit{How the intermediate states in reasoning trajectories help self-rewarding?} By empirically analyzing distinctive trajectory patterns between correct and incorrect responses, we demonstrate that intermediate reasoning states provide valuable information to distinguish the correctness of responses. After gaining insight into different trajectory patterns, we define two features in the reasoning trajectory and further combine them as self-rewarding signals.

\begin{figure}[htbp]
    \centering
    \includegraphics[width=\linewidth]{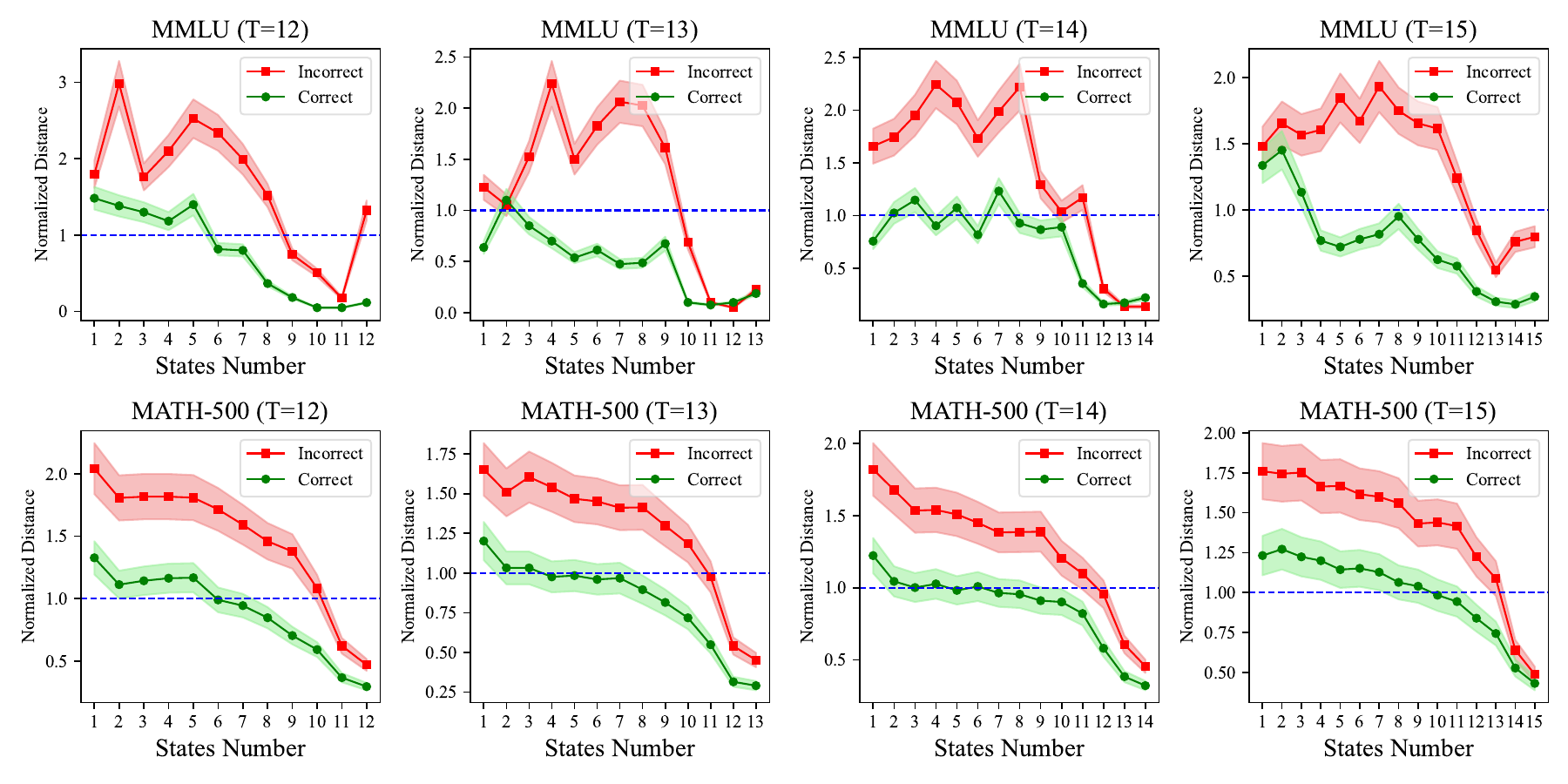}
    \vspace{-18pt}
    \caption{Normalized distance curve of correct and incorrect trajectories with varying state numbers.}
    \label{fig:vol}
    \vspace{-12pt}
\end{figure}

\subsection{Empirical Analysis: Trajectory Patterns of Different Samples\label{sec:3.1}}
To investigate whether specific discrepancies exist between correct and incorrect reasoning trajectories, we conduct an empirical study. Specifically, we select two representative reasoning benchmarks MATH and MMLU: the former covers tasks in various mathematical reasoning fields with different difficulty levels; the latter covers general subjects that require world knowledge to reason.

We now examine how intermediate reasoning states in reasoning trajectories converge toward their own final answers. To this end, we sample multiple reasoning trajectories for each input prompt and categorize them based on the number of intermediate reasoning states $T$. We then compute the normalized distance ${d(s_i, y)}/{\min_{y' \neq y} d(s_i, y')}$ and connect all values as a changing curve, where $y$ is the corresponding final answer of the intermediate reasoning state ${s_i}$, and $y'$ denotes alternative answers generated by other trajectories. Figure~\ref{fig:vol} shows the curve comparisons of correct and incorrect reasoning trajectories across varying numbers of intermediate reasoning states, with a value less than 1 indicating an intermediate reasoning state is closest to its own answer.

We find that correct and incorrect responses exhibit distinct trajectory patterns: For correct responses, the reasoning trajectories rapidly ``approach'' the final answer at early-to-mid states and consistently lead to their own final answer; In contrast, incorrect responses typically exhibit greater fluctuation and delayed convergence, with intermediate reasoning states remaining relatively distant from the final answer for an extended period. This provides strong evidence that the features (consistency and volatility) inside a reasoning trajectory can serve as effective reward signals. We provide results on more reasoning domains to validate this in Section~\ref{sec:4.3}.

\subsection{Trajectory Features: Consistency and Volatility\label{sec:3.2}}
Now we introduce two quantitative features from the distance matrix $\bm{D}$ that measure the distinct patterns inside a reasoning trajectory $\tau$.

\textbf{Consistency}. We first obtain the consistency of a trajectory between the intermediate reasoning states and the final answer. $Con(\tau)$ is the percentage of intermediate states that lead to the corresponding answer $y$ among all different answers:
\begin{equation}
Con(\tau) = \frac{1}{T} \sum_{i=0}^{T-1} \mathbb{I} \left( \mathbf{D}[i,0] = \min_{0 \leq k < K} \mathbf{D}[i,k] \right),
\end{equation}
where $\mathbb{I}(\cdot)$ is the indicator function and $\mathbf{D}[:,0]$ denotes distance to the answer of trajectory $\tau$ itself.

\textbf{Volatility}. We then obtain the volatility of a trajectory between the intermediate states and the final answer. $Vol(\tau)$ measures the extent to which the intermediate reasoning states deviate from their corresponding final answer:
\begin{equation}
Vol(\tau) = \frac{1}{T} \max \left\{ i \in [0, T-1] \,\mid \, \mathbf{D}[i,0] \neq \min_{0 \leq k < K} \mathbf{D}[i,k]\right\}.
\end{equation}
This formula identifies the last state in the reasoning trajectory that deviates from the corresponding answer, and expresses it as a ratio of the total number of intermediate reasoning states.

We briefly discuss the practical significance of these two features in evaluating the quality of reasoning trajectories. Specifically, $Con(\tau)$ captures the prevalence of intermediate reasoning states that favor their own final answer $y$ over other alternatives. Higher consistency can be interpreted as indication of a more deliberative reasoning process. In contrast, $Vol(\tau)$ reflects how far the reasoning trajectory ``strays'' toward their answers during the reasoning process evolves. Higher volatility indicates a more erratic and less stable reasoning trajectory prone to flawed or biased decisions.

\subsection{Self Rewarding in Reinforcement Learning}
\label{sec:3.3}
In Section~\ref{sec:3.1} and \ref{sec:3.2}, we have demonstrated that correct reasoning trajectories exhibit higher consistency and lower volatility compared to incorrect ones. We now aim to leverage this pattern to develop a self-rewarding reinforcement learning mechanism tailored for reasoning scenarios.

\subsubsection{Intrinsic Reward for Self-Evaluation}
To formulate a reward signal that distinguishes answer correctness in a batch of reasoning trajectories, we wish to jointly utilize consistency and volatility as the intrinsic reward. During dynamic RL sampling, however, different reasoning trajectories can yield the same final answers, while certain samples may exhibit significant noise in these features due to stochasticity, potentially leading to misjudgment. Inspired by GRPO~\citep{shao2024deepseekmath}, which uses intra-group relative rewards to estimate baselines, we group trajectories by their final answers and assign intra-group rewards. Specifically, we partition the sampled $N$ reasoning trajectories into groups based on their final answers, resulting in $K$ distinct groups. Let $G$ denote the number of samples in a group that share the same final answer.

A straightforward approach is to linearly aggregate the numerical difference between consistency and volatility within each group. We define the reward function $r_\text{int}^{\bm{L}}$ as the subtraction of consistency and volatility for each trajectory group:
\begin{equation}
    r_\text{int}^{\bm{L}} = \frac{1}{G} \cdot \sum_{i=0}^{G-1} \left( Con(\tau_i) - Vol(\tau_i) \right).
\end{equation}
This linear aggregation directly reflects the intuition that higher consistency and lower volatility indicate a more reliable reasoning trajectory. However, it suffers from sensitivity to extreme outliers, \textit{i.e.}, abnormally low consistency or high volatility can distort the group’s overall reward. To mitigate this, we propose a vectorial aggregation that embeds consistency and volatility into a vector space. Under this formulation, each trajectory is represented by a vector composed of these two features:
\begin{equation}
\bm{v}_i = Con(\tau_i) \cdot \left[ \cos(Vol(\tau_i)), \sin(Vol(\tau_i)) \right],
\end{equation}
these trajectory-specific vectors are then summed within a group to produce a combined vector:
\begin{equation}
    \bm{V}_{\text{group}} = \sum_{i=0}^{G-1} \bm{v}_i = \left( \sum_i Con(\tau_i) \cos(Vol(\tau_i)), \sum_i Con(\tau_i) \sin(Vol(\tau_i)) \right).
\end{equation}
The magnitude of this combined vector serves as the final reward signal $r_\text{int}^{\bm{V}}$ for the group:
\begin{equation}
r_\text{int}^{\bm{V}} \! \!  = \! \! \frac{1}{G} \left\| \bm{V}_{\text{group}} \right\| \! \! = \! \! \frac{1}{G} \! \sqrt{\left( \sum_{i=0}^{G-1} Con(\tau_i) \cos(Vol(\tau_i)) \right)^2 \! \! \!+ \! \! \left( \sum_{i=0}^{G-1} Con(\tau_i) \sin(Vol(\tau_i)) \right)^2 }.
\end{equation}
The geometric formulation of $r_\text{int}^{\bm{V}}$ preserves the interplay between consistency and volatility while exhibiting greater robustness to outliers. Moreover, it retains the same monotonic behavior with respect to variations in $Con(\tau)$ and $Vol(\tau)$ as $r_\text{int}^{\bm{L}}$. A formal proof is provided in Appendix~\ref{app:proof}.

\subsubsection{Curiosity Reward for Self-Exploration}
Our intrinsic reward relies on contrasting diverse reasoning trajectories to identify reliable samples based on their internal consistency and volatility. A recent study~\citep{does2025yue} reveals that model diversity may decrease as training proceeds, leading to homogeneous results. This poses difficulty for the intrinsic reward to perform meaningful comparisons across distinct answers. To address this, we introduce a curiosity reward to promote a broader exploration of the solution space during RL sampling.

A primary measure of curiosity is the average token probabilities under transitions from intermediate reasoning states, which can be denoted as $d(s_i, s_{i+1})$. This quantifies how the model ``anticipates'' the subsequent reasoning states. Ideally, the curiosity reward should be assigned to the states where the model exhibits lower probabilities for exploration. To prevent the curiosity reward from being overly dominant by a few tokens with extremely low probabilities, we introduce a penalty term $p_{\text{KL}}$ based on the KL divergence from a uniform distribution. The curiosity reward is then defined as:
\begin{equation}
r_{\text{cur}} = d(s_i, s_{i+1}) - p_{\text{KL}} = -\frac{1}{|s_{i+1}|} \sum_{j=|s_i|}^{|s_{i+1}|} \log \pi_{\theta}(s_{i+1}[j] \mid s_{i+1}[: j]) - \ln [KL(P_{i+1}, \mathcal{U}) + 1],
\label{eq:diversity}
\end{equation}
where $\mathcal{U}$ is the uniform distribution, $P_{i+1}$ represents token probability distribution of the state $s_{i+1}$.

Overall, our self-rewarding reinforcement learning framework integrates both self-evaluation reward and self-exploration reward into a total reward $r_{\text{covo}} = r_{\text{int}} + r_{\text{cur}}$. The detailed algorithmic process of reward calculation is provided in Appendix~\ref{app:algorithm}.

\section{Experiments}
\label{sec:4}
\subsection{Environmental Setups}
\label{sec:4.1}
\textbf{Datasets.}
For training, we only adopt instructions from the training set of Open-Reasoner-Zero~\citep{hu2025orz} as our training data, \textit{without relying on any labels}. For evaluation, we employ a variety of datasets that can be categorized into three popular reasoning domains: (1) \textit{Mathematical} reasoning benchmarks include GSM8K~\citep{cobbe2021training}, MATH-500~\citep{hendrycks2021measuring}, Olympiad Bench~\citep{he2024olympiadbench} and AMC-23\footnote{https://huggingface.co/datasets/AI-MO/aimo-validation-amc}; (2) \textit{Commonsense} reasoning benchmarks include MMLU-Pro~\citep{mmlupro2024wang} and CommonsenseQA~\citep{commonqa2019talmor}; (3) \textit{Science} reasoning benchmarks include GPQA~\citep{rein2024gpqa}.
These datasets provide essential assessments on the reasoning capabilities of LLMs. More information about our training and evaluation dataset is in Appendix~\ref{app:dataset}.

\textbf{Language Models.} CoVo is generally applicable to a wide range of LLMs. In our experiments, we select three popular open-source LLMs: Llama3.2-3B-Instruct~\citep{llama2024dubey}, Qwen2.5-3B-Instruct~\citep{qwen252024yang}, and Qwen2.5-7B-Instruct~\citep{qwen252024yang}. By focusing on LLMs with relatively small parameters under 10B, we expect that CoVo enables the smaller model to learn to reason via self-rewarding RL, ultimately achieving results comparable to or exceeding supervised RL in complex reasoning tasks. The implementation details can be found in Appendix~\ref{app:more_results}.

\textbf{Baselines.} We compare CoVo against three categories of baselines in our experiments: (1) \textit{In-context learning methods}, including Zero-shot CoT~\citep{wei2022chain}, Few-shot CoT, and CoT+SC@4~\citep{sc2023wang}; (2) \textit{supervised RL methods} that use ground-truth label to compute rule-based reward, including GRPO~\citep{shao2024deepseekmath}, RLOO~\citep{rloo2019Kool}, Reinforce~\citep{reinforce2021zhang} and Reinforce++~\citep{hu2025reinforce++}; (3) \textit{Self-rewarding RL methods}, including IRPO~\citep{irpo2024pang}, EMPO~\citep{empo2025zhang} and TTRL~\citep{ttrl2025zuo}.% Note that for EMPO and TTRL, we report their best evaluation performance among the whole training process, since they both are observed model collapse in the later training stages, as illustrated in Fig.~\ref{fig:compare} and Section~\ref{sec:5.1}.

\textbf{Evaluation Metric.} We will evaluate both the \textit{Performance} and \textit{Reasoning Diversity} of CoVo in our experiments. For performance evaluation, we employ Pass@1 accuracy under a sampling temperature of 0 across all benchmark datasets as our metric, with correctness determined through comparison between generated outputs and ground truth using Math-Verify\footnote{https://github.com/huggingface/Math-Verify}. For detailed evaluation settings and diversity evaluation results, please refer to Appendix~\ref{app:more_results}.

\subsection{Main Results}
\textbf{Performance Comparison.}
We conduct a comprehensive evaluation of the effectiveness of CoVo across different reasoning domains. Table~\ref{tab:main_results} presents comparative results between our method and state-of-the-art baselines. We find CoVo achieves performance \textit{comparable to or even surpass supervised RL methods} and is applicable to different model backbones. For example, when applied to Llama3.2-3B-Instruct, CoVo achieves 51.2\% accuracy on par with rule-based GRPO (51.8\%) on MATH benchmarks. Notably, Qwen2.5-3B-Instruct initially achieves 65.4\% accuracy on the MATH benchmark using the few-shot CoT prompting technique. After training by CoVo, its performance improves to 68.2\%, which outperforms rule-based supervised RL methods like RLOO and GRPO. To demonstrate the effectiveness of CoVo when model parameter scaling, we also conduct experiments on Qwen2.5-7B-Instruct. We find that even with an increase in parameters, our method still achieves
\begin{wrapfigure}{r}{0.35\textwidth}
    %\vspace{-5pt}
    \centering
    \includegraphics[width=0.35\textwidth]{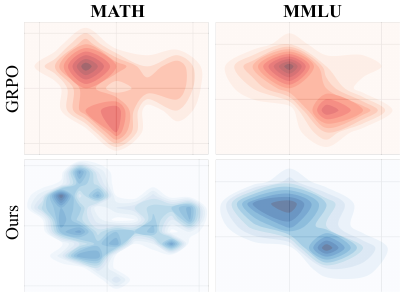}
    \vspace{-15pt}
    \caption{Visualization of reasoning diversity on different methods.}
    \label{fig:diversity}
    \vspace{-15pt}
\end{wrapfigure}
performance on par with or even exceeding the supervised RL baselines in most cases. Given the growing demand for large-scale RL in industry, this scalability demonstrates the effectiveness of our approach and its potential for broad real-world deployment.

\begin{table}[t]
% \vspace{-0.1in}
\caption{Results of all methods in diverse reasoning domains. \textit{Italics} denotes that this method is training-free, ``*'' denotes that this method requires sampling multiple outputs for inferences, ``$\dagger$'' denotes that this method utilizes self-rewarding without external labels, \underline{underline} represents the best performance among all baselines, \textbf{bold} represents the best performance among all methods.}
% \vspace{-0.2cm}
\centering
\footnotesize
\renewcommand\arraystretch{1.0}

\setlength{\tabcolsep}{1.2mm}{
\resizebox{1\textwidth}{!}{
\begin{tabular}{lccccccc}
\toprule
\multicolumn{8}{c}{\bf Model I: Llama3.2-3B-Instruct ~~(Accuracy $\uparrow$)} \\
\midrule

{\bf Method}
& MATH-500 & GSM8K & AMC-23 & Olympiad & MMLU-Pro & GPQA & CommonsenseQA \\
\cmidrule(lr){1-1} \cmidrule(lr){2-8}

1. {\it Zeroshot-CoT}
& 47.0 & 75.8 & 17.5 & 15.7 & 34.9 & 30.8 & 71.2 \\
2. {\it Fewshot-CoT} 
& 48.2 & 77.7 & 17.5 & 17.4 & 36.4 & 31.5 & 72.5 \\
3. {\it CoT+SC@4} * 
& 49.6 & 78.9 & 20.0 & 18.6 & 37.1 & \underline{31.9} & 73.6 \\
4. GRPO
& \underline{\textbf{51.8}} & 79.2 & \underline{\textbf{25.0}} & 18.9 & 36.8 & 31.5 & \underline{74.2} \\
5. RLOO
& 51.4 & 79.4 & 22.5 & 19.2 & 36.2 & 31.5 & 73.9 \\
6. IRPO
& 47.2 & 77.5 & 17.5 & 16.3 & 34.7 & 30.1 & 71.8 \\
7. Reinforce
& 51.4 & 79.1 & 20.0 & 19.1 & 36.6 & 31.7 & 73.9 \\
8. Reinforce++
& 51.6 & \underline{\textbf{79.6}} & 22.5 & \underline{19.4} & 37.1 & 31.9 & 74.1 \\
9. EMPO $\dagger$
& 51.2 & 79.3 & \underline{\textbf{25.0}} & 19.2 & \underline{\textbf{37.2}} & 31.5 & 73.8 \\
10. TTRL $\dagger$
& 51.0 & 79.5 & \underline{\textbf{25.0}} & 19.1 & 36.4 & 31.7 & 74.1 \\
\rowcolor{pink!12}
\textbf{CoVo (Ours) $\dagger$}
& 51.2 & \textbf{79.6} & \textbf{25.0} & \textbf{19.6} & \textbf{37.2} & \textbf{32.1} & \textbf{74.4}  \\

\midrule
\multicolumn{8}{c}{\bf Model II: Qwen2.5-3B-Instruct ~~(Accuracy $\uparrow$)} \\
\midrule

{\bf Method}
& MATH-500 & GSM8K & AMC-23 & Olympiad & MMLU-Pro & GPQA & CommonsenseQA \\
\cmidrule(lr){1-1} \cmidrule(lr){2-8}

1. {\it Zeroshot-CoT}
& 63.6 & 85.9 & 40.0 & 27.9 & 41.7 & 29.7 & 73.6 \\
2. {\it Fewshot-CoT}
& 65.4 & 86.4 & 37.5 & 28.6 & 43.4 & 30.8 & 75.3 \\
3. {\it CoT+SC@4} * 
& 66.2 & 86.9 & 40.0 & 29.0 & 43.8 & 30.6 & \underline{\textbf{75.7}} \\
4. GRPO
& 67.4 & 88.7 & \underline{45.0} & 29.4 & 43.2 & 31.3 & 75.5 \\
5. RLOO
& 66.8 & \underline{\textbf{89.1}} & 42.5 & 28.5 & 42.6 & 30.8 & 75.6 \\
6. IRPO
& 65.8 & 87.2 & 37.5 & 27.4 & 41.5 & 29.2 & 73.8 \\
7. Reinforce
& 67.0 & 88.4 & 42.5 & \underline{29.5} & 43.2 & 31.3 & 75.8 \\
8. Reinforce++
& 67.6 & 88.7 & \underline{45.0} & 29.2 & 43.6 & \underline{31.5} & 75.2 \\
9. EMPO $\dagger$
& 67.4 & 88.5 & 42.5 & 29.3 & \underline{\textbf{43.9}} & 31.3 & 75.3 \\
10. TTRL $\dagger$
& \underline{67.8} & 88.6 & \underline{45.0} & \underline{29.5} & 43.2 & 31.3 & 75.5 \\
\rowcolor{pink!12}
\textbf{CoVo (Ours) $\dagger$}
& {\bf 68.2} & 88.7 & \textbf{47.5} & \textbf{29.6} & 43.5 & \textbf{31.7} & \textbf{75.7} \\

\midrule
\multicolumn{8}{c}{\bf Model III: Qwen2.5-7B-Instruct ~~(Accuracy $\uparrow$)} \\
\midrule

{\bf Method}
& MATH-500 & GSM8K & AMC-23 & Olympiad & MMLU-Pro & GPQA & CommonsenseQA \\
\cmidrule(lr){1-1} \cmidrule(lr){2-8}

1. {\it Zeroshot-CoT} 
& 76.4 & 91.1 & 55.0 & 39.4 & 55.6 & 36.2 &81.3 \\
2. {\it Fewshot-CoT} 
& 77.2 & 91.6 & 50.0 & 40.9 & 56.4 & 36.4 & 82.1 \\
3. {\it CoT+SC@4} * 
& 77.6 & 92.1 & 52.5 & 39.7 & 56.9 & 36.8 & 82.6 \\
4. GRPO
& \underline{78.2} & 92.3 & 57.5 & \textbf{\underline{41.3}} & 57.1 & 36.6 & \textbf{\underline{82.9}} \\
5. RLOO
& 77.8 & 91.8 & 55.0 & 40.8 & \textbf{\underline{57.4}} & 36.4 & 82.4 \\
6. IRPO
& 76.8 & 91.4 & 47.5 & 39.5 & 56.6 & 34.8 & 81.2 \\
7. Reinforce
& 77.8 & 92.3 & 57.5 & 41.3 & 56.9 & 36.8 & 82.7 \\
8. Reinforce++
& \underline{78.2} & 92.6 & \textbf{\underline{60.0}} & 40.9 & 57.2 & 36.6 & 82.5 \\
9. EMPO $\dagger$
& 78.0 & 92.5 & 55.0 & 41.2 & \textbf{\underline{57.4}} & \textbf{\underline{37.1}} & 82.8 \\
10. TTRL $\dagger$
& \underline{78.2} & \textbf{\underline{92.8}} & 57.5 & 40.9 & 56.7 & 36.6 & 81.9 \\
\rowcolor{pink!12}
\textbf{CoVo (Ours) $\dagger$}
& \textbf{78.4} & 92.5 & \textbf{60.0} & 40.8 & 57.0 & 36.8 & 82.8 \\

\bottomrule
\end{tabular}%
}
}
\label{tab:main_results}%
\vspace{-0.15in}
\end{table}

\textbf{Domain Generalization.}
While our training dataset solely consisting of data from the mathematical reasoning domain, CoVo has also shown strong generalization ability across other reasoning domains. 
Notably, in comparisons between commonsense reasoning and science reasoning tasks, all benchmarks exhibit comparable performance gains relative to other strong baselines, indicating that the reasoning capabilities of CoVo acquired from mathematical training can also effectively transfer to unseen general problem sets.

\textbf{Reasoning Diversity.}
In addition to performance improvements, we investigate whether our method promotes diverse reasoning strategies rather than collapsing into homogeneous solutions. Intuitively, we encode the model responses using Sentence Transformers~\citep{reimers2019sentence} and apply UMAP~\citep{mcinnes2018umap} for 2D reduction and visualization. As depicted in Fig.~\ref{fig:diversity}, the reasoning paths of our method exhibit a more dispersed distribution, in contrast to the concentrated clusters observed in GRPO. This diversity stems from our reward mechanism that incentivizes novel reasoning paths, as formalized in Eq.~(\ref{eq:diversity}). The quantitative metrics on reasoning diversity are provided in Appendix~\ref{app:more_results}.

\subsection{Analysis}
\label{sec:4.3}
\textbf{Features Discrepancy.} In Section~\ref{sec:3}, we have conducted qualitative and quantitative analysis on the discrepancies of consistency and volatility across different reasoning trajectories. Now we report the specific statistical data (\textit{i.e.}, the values of consistency and volatility in all responses) on four widely-used reasoning datasets in Table~\ref{tab:feature}, aiming to provide more rigorous empirical support for our insights. To facilitate a more intuitive understanding of the underlying patterns behind the data, we further visualize the statistics in the form of a two-dimensional distribution in Fig.~\ref{fig:gs}. The experimental investigations on the two features reveal clear discrepancies between correct and incorrect responses, which further validate the effectiveness of consistency and volatility as intrinsic indicators to distinguish correct and incorrect reasoning trajectories.
% {r}{0.4\textwidth}
\begin{table}[!t]
% \vspace{-20pt}
\centering
\caption{The statistical data of consistency and volatility between correct and incorrect responses.}
\label{tab:feature}
\resizebox{\textwidth}{!}{
\begin{tabular}{cccccc}
    \toprule
    \textbf{Feature} & \textbf{Responses} & \textbf{MATH-500} & \textbf{MMLU} & \textbf{GPQA} & \textbf{CommonsenseQA} \\
    \midrule
    \multirow{2}{*}{Consistency} 
    & Correct  & {0.832{$\pm$0.194}} & {0.818{$\pm$0.112}} & {0.787{$\pm$0.126}} & {0.889{$\pm$0.133}} \\
    & Incorrect & 0.215{$\pm$0.136} & {0.236{$\pm$0.087}} & {0.218{$\pm$0.093}} & {0.274{$\pm$0.129}}\\
    \midrule
    \multirow{2}{*}{Volatility}
    & Correct  & {0.271{$\pm$0.238}} & {0.310{$\pm$0.255}} & {0.386{$\pm$0.268}} & {0.227{$\pm$0.195}} \\
    & Incorrect & {0.821{$\pm$0.117}} & {0.822{$\pm$0.087}} & {0.867{$\pm$0.785}} & {0.761{$\pm$0.156}}\\
    \bottomrule
\end{tabular}%
}
\vspace{-10pt}
\end{table}

% {r}{0.6\textwidth}
\begin{figure}[!t]
    \centering
    \includegraphics[width=\linewidth]{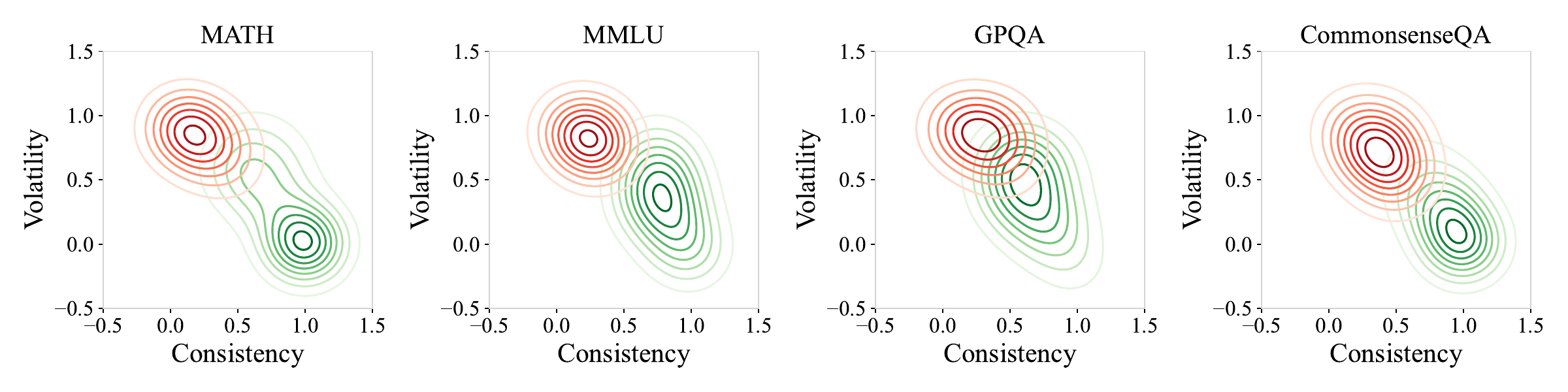}
    \vspace{-20pt}
    \caption{The distribution of consistency and volatility for correct and incorrect responses across diverse domains, where \textcolor{darkgreen}{green} and \textcolor{red}{red} curves denote the distributions for \textcolor{darkgreen}{correct responses} and \textcolor{red}{incorrect responses}, respectively.}
    \vspace{-10pt}
    \label{fig:gs}
\end{figure}

\textbf{Training Dynamics.}
%Due to the absence of rule-based real reward during training,
We further present the training variation curves of CoVo on four key metrics during training process. The results in Fig.~\ref{fig:compare} demonstrate that our reward accuracy remains consistently high and exhibits greater stability against potential reward hacking compared to majority-voting-based reward signals. This stands in contrast to other self-rewarding methods like TTRL and EMPO, which rely on majority voting and answer probabilities for reward assignment.

\begin{figure}[!t]
    \centering
    \includegraphics[width=\textwidth]{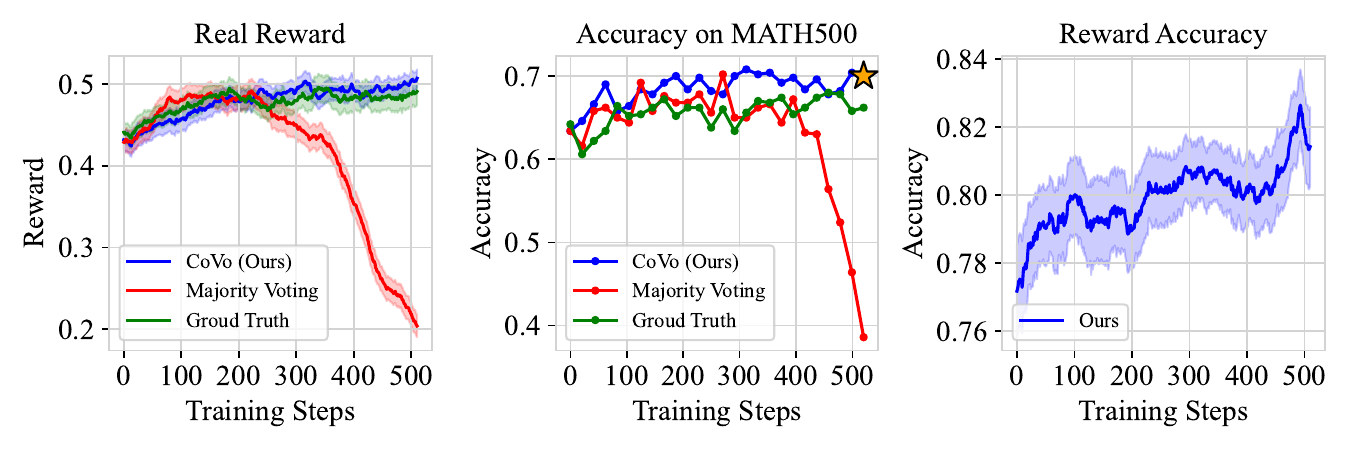}
    \vspace{-25pt}
    \caption{Training dynamics with different types of reward signals using Qwen2.5-3B-Instruct.}
    \label{fig:compare}
    \vspace{-6pt}
\end{figure}

\begin{table}[t]
\caption{Ablation study of reward combinations, \textbf{bold} represents the best performance.}
\vspace{-0.2cm}
\centering
\footnotesize
\renewcommand\arraystretch{1.0}

\setlength{\tabcolsep}{1.2mm}{
\resizebox{1\textwidth}{!}{
\begin{tabular}{cccccccccc}
\toprule

\multicolumn{3}{c}{\bf Reward}
& \multicolumn{7}{c}{\bf Llama3.2-3B-Instruct} \\
\cmidrule(lr){1-3} \cmidrule(lr){4-10}

{$r_{\text{int}}^{\bm{V}}$} & {$r_{\text{int}}^{\bm{L}}$} & {$r_{\text{cur}}$} & MATH-500 & GSM8K & AMC-23 & Olympiad & MMLU-Pro & GPQA & CommonsenseQA \\
\midrule
\rowcolor{gray!8}
$\checkmark$ & & $\checkmark$
& 51.2 & \textbf{79.6} & \textbf{25.0} & 19.6 & 37.2 & 32.1 & \textbf{74.4} \\
\rowcolor{gray!8}
 & $\checkmark$ & $\checkmark$
& 51.0 & 79.2 & \textbf{25.0} & 18.8 & \textbf{37.8} & 31.5 & 74.2 \\
$\checkmark$ & &
& \textbf{51.6} & 79.5 & 22.5 & \textbf{20.6} & 36.2 & \textbf{32.4} & 73.8 \\
 & $\checkmark$ &
& 50.8 & 78.3 & 20.0 & 18.4 & 36.9 & 31.5 & 73.3 \\
 &  & $\checkmark$
& 46.2 & 69.5 & 15.0 & 8.15 & 28.7 & 19.6 & 65.5 \\

\midrule
\multicolumn{3}{c}{\bf Reward}
& \multicolumn{7}{c}{\bf Qwen2.5-3B-Instruct} \\
\cmidrule(lr){1-3} \cmidrule(lr){4-10}

{$r_{\text{int}}^{\bm{V}}$} & {$r_{\text{int}}^{\bm{L}}$} & {$r_{\text{cur}}$} & MATH-500 & GSM8K & AMC-23 & Olympiad & MMLU-Pro & GPQA & CommonsenseQA \\
\midrule

\rowcolor{gray!8}
$\checkmark$ & & $\checkmark$
& \textbf{68.2} & 88.7 & \textbf{47.5} & \textbf{29.6} & 43.5 & \textbf{31.7} & 75.7 \\
\rowcolor{gray!8}
 & $\checkmark$ & $\checkmark$
& 67.8 & \textbf{88.9} & 42.5 & 29.0 & \textbf{43.8} & 31.1 & \textbf{75.9} \\
$\checkmark$ & &
& 68.0 & 88.4 & 42.5 & 29.3 & 42.9 & 31.3 & 75.1 \\
 & $\checkmark$ &
& 66.4 & 87.9 & 40.0 & 28.3 & 43.1 & 30.6 & 74.7 \\
 &  & $\checkmark$
& 61.8 & 80.6 & 32.5 & 23.9 & 36.8 & 23.4 & 70.4 \\
\bottomrule
\end{tabular}%
}
}

\label{tab:ablation}%

\vspace{-0.15in}
\end{table}

\subsection{Ablation Study}
Our self-rewarding mechanism integrates two reward signals: an intrinsic reward and a curiosity reward. Note that the intrinsic reward is implemented using two alternative forms, including a linear aggregation $r_{\text{int}}^{\bm{L}}$ and a vectorial aggregation $r_{\text{int}}^{\bm{V}}$. To evaluate their contribution to the performance, we conduct ablation studies. Table~\ref{tab:ablation} reports the results of CoVo across different reasoning domains under five different reward configurations. The results show that in almost all reasoning tasks, the combination of intrinsic and curiosity rewards consistently outperforms the individual reward signal, indicating a complementary and positive influence from both intrinsic and curiosity rewards. Moreover, we observe that employing the intrinsic reward $r_{\text{int}}^{\bm{V}}$ based on vectorial aggregation yields superior accuracy on mathematical and scientific reasoning tasks, which suggests that $r_{\text{int}}^{\bm{V}}$ is less susceptible to sample noise in scenarios that require substantial arithmetic and analytical reasoning. These findings support our claim in Section~\ref{sec:3.3} that $r_{\text{int}}^{\bm{V}}$ can be more robust than $r_{\text{int}}^{\bm{L}}$.

\section{Theoretical Analysis}
\label{sec:5}
In this section, we present theoretical foundations on self-rewarding reinforcement learning. To maintain conciseness, the full proof of all propositions are deferred to Appendix~\ref{app:proof}.

\subsection{Rethinking Majority Voting in RL Optimization}
\label{sec:5.1}
\begin{proposition}[Model Collapse]
\label{pro:collapse}
Given an input prompt $x \in \mathcal{X}$, the policy model sample $n$ responses $\{y_1, y_2, \ldots, y_n\} \subseteq \pi_\theta(\cdot | x)$ and denote $C(y) = \sum_{i=1}^{n} \mathbb{I}\{y_i = y\}.$ Suppose $y^* \in \mathcal{Y}$ that satisfies $y^* = \operatorname{argmax}_{y \in \mathcal{Y}} \pi_\theta(y | x)$ and a reward function $r:\mathcal{X} \times \mathcal{Y}$ defined by
\begin{equation}
r(x,y) = \begin{cases} 1, & \text{if } y = \operatorname{argmax}_{y'\in\mathcal{Y}} C(y'),\\[1mm] 0, & \text{otherwise}. \end{cases}
\end{equation}
such that the following hold:

\begin{itemize}[leftmargin=*]
    \item When using gradient ascent to maximize the RLHF objective with respect to the reward function $r$, $\pi_\theta(y^* \mid x)$ converges to 1 as training proceeds.
    \item If $y^*$ is not the ground truth, reward hacking will happen as training proceeds.
\end{itemize}
\end{proposition}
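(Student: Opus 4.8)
The plan is to treat the two bullet points separately, both reducing to fairly elementary observations once the reward structure is made explicit. For the first bullet, I would start by noting that the reward $r(x,y)$ as defined is a fixed (though sample-dependent) indicator: it assigns reward $1$ to the current plurality winner $\hat y := \operatorname{argmax}_{y'} C(y')$ and $0$ to everything else. The key step is to argue that, as training proceeds under gradient ascent on the RLHF objective $\mathcal{J}_{\text{RL}}(\theta;x) = \mathbb{E}_{y \sim \pi_\theta(\cdot|x)}[r(x,y) - \lambda \mathrm{KL}(\cdot)]$, the mode $y^\star = \operatorname{argmax}_y \pi_\theta(y|x)$ is with high probability the sampled plurality winner $\hat y$ (for $n$ large this follows from concentration of $C(y)/n$ around $\pi_\theta(y|x)$), so the reward consistently points toward increasing $\pi_\theta(y^\star|x)$. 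Then I would invoke the standard policy-gradient fact that maximizing $\mathbb{E}_{y\sim\pi_\theta}[\mathbb{I}\{y=\hat y\}] = \pi_\theta(\hat y|x)$ via gradient ascent drives that probability monotonically toward its supremum, which is $1$ (ignoring the KL term, or noting that with a fixed reference $\pi_{\theta_{\text{ref}}}$ and diminishing influence the fixed point of the regularized objective still has $\pi_\theta(y^\star|x)\to 1$ as $\lambda\to 0$, or that the argmax is preserved throughout). A clean way to package this is: once $\pi_\theta(y^\star|x) > 1/2$, the mode is stable under the update and the objective restricted to the ``raise $y^\star$'' direction is a one-dimensional concave-like ascent with unique maximizer at probability $1$.

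For the second bullet, the argument is essentially a definitional consequence of the first. Reward hacking is the phenomenon where the policy achieves high reward without improving the true objective (answer correctness). If $y^\star$ is not the ground truth, then the first bullet shows $\pi_\theta(y^\star|x)\to 1$, so the model concentrates all its mass on a wrong answer; meanwhile the self-assigned reward $\mathbb{E}_{y\sim\pi_\theta}[r(x,y)]\to 1$ because almost all samples equal $y^\star = \hat y$. Thus the measured reward saturates at its maximum while the accuracy on $x$ goes to $0$ — precisely the mismatch that defines reward hacking. I would state this as: the training reward converges to $1$ while $\Pr_{y\sim\pi_\theta}[y = y_{\text{gt}}] \to 0$, so the gap between proxy reward and true performance is maximal.

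The main obstacle — and the place where the proof needs care rather than routine calculation — is handling the interaction between the stochastic plurality winner $\hat y$ (which depends on the sampled batch and could in principle fluctuate between answers early in training) and the deterministic mode $y^\star$. The cleanest route is a two-phase argument: in Phase 1, appeal to the hypothesis $y^\star = \operatorname{argmax}_y \pi_\theta(y|x)$ together with a concentration bound (e.g., via Hoeffding on the counts $C(y)$) to show that for sufficiently large $n$ the event $\{\hat y = y^\star\}$ holds with probability approaching $1$, so in expectation the gradient step increases $\pi_\theta(y^\star|x)$; in Phase 2, once $\pi_\theta(y^\star|x)$ crosses $1/2$, the winner is locked in and the remaining convergence to $1$ is monotone. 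A secondary technical point is the KL regularizer: I expect the proposition is implicitly taken in the regime $\lambda \to 0$ (or $\lambda$ small enough that the regularized optimum still has $\pi_\theta(y^\star|x)$ arbitrarily close to $1$), and I would state that assumption explicitly rather than trying to prove exact convergence to $1$ for fixed positive $\lambda$.
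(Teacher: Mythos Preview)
Your proposal is correct and follows the same overarching logic as the paper: the majority-voting reward consistently points toward the mode $y^\star$, so gradient ascent monotonically increases $\pi_\theta(y^\star\mid x)$ until it saturates at $1$; the second bullet is then a definitional consequence. The differences are in emphasis. The paper proceeds by writing the policy-gradient update explicitly, applying a first-order Taylor expansion to $\ln\pi_{\theta(t+1)}(y^\star\mid x)$, and observing that the increment decomposes as a non-negative term $\pi_\theta(y^\star|x)\,r(x,y^\star)\,\|\nabla\ln\pi_\theta(y^\star|x)\|_2^2$ plus a term that vanishes because $r(x,y)=0$ for $y\neq y^\star$, then invoking monotone convergence of the bounded increasing sequence $\{\ln\pi_{\theta(t)}(y^\star|x)\}$. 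Crucially, the paper \emph{silently identifies} the sampled plurality winner $\hat y$ with the mode $y^\star$ (in the proof it simply redeclares ``$y^\star$ is the response with the highest count $C(y)$''), which is precisely the step you flag as the main obstacle and address via a Hoeffding-style concentration bound plus a two-phase lock-in argument --- in that respect your plan is more careful than the paper's own proof. Conversely, the paper's explicit Taylor computation makes the monotonicity step concrete, whereas you defer to a ``standard policy-gradient fact''; filling that in amounts to the same calculation. Your treatment of the KL regularizer (working in the regime $\lambda\to 0$) matches the paper's implicit stance: the paper drops the KL term from the objective at the outset and never revisits it.
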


\textbf{Remark.}
Intuitively, the optimization object of majority-voting-based reward in reinforcement learning is $\max_\theta \mathbb{E}_{y \sim  \pi(\cdot \mid x)} \left[ \log \pi_\theta(y^* | x) \right] $, which diverges from true answer distribution when model priors misalign with ground truth. This explains why naively optimizing for majority voting causes performance collapse as training proceeds in Fig~\ref{fig:compare}.

\subsection{Theoretical Foundations of Self-Rewarding Reinforcement Learning}
\begin{proposition}[Variational Optimization for Reasoning]
\label{pro:var}
Let $\pi_\theta(y|x)$ denote a language model that generates final answers $y$ conditioned on input prompt $x$, with $s$ representing latent intermediate reasoning states. Suppose the reasoning state is guided by a prior distribution $\pi_{\theta(0)}(s|x)$ and define the reward function $r(s,x,y) \in [0,1]\propto \log \pi_\theta(y|x \oplus s)$, the following optimizing bound holds:
\begin{equation}
\log \pi_\theta(y|x) \geq \underbrace{\mathbb{E}_{s,y \sim \pi_\theta(\cdot|x)}\left[ r(s,y,x) \right]}_{\text{Reward Assignment}} - \underbrace{D_{\text{KL}}\left[ \pi_\theta(s|x) \, \| \, \pi_{\theta(0)}(s|x) \right]}_{\text{KL Regularization}},
\end{equation}
where $\oplus$ denotes the concatenation of the input prompt $x$ with the reasoning state $s$.
\end{proposition}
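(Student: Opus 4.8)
The plan is to read Proposition~\ref{pro:var} as an evidence lower bound in which the intermediate reasoning state $s$ is a latent variable, and to prove it by the standard variational (ELBO) argument. First I would make the generative model explicit: I read ``the reasoning state is guided by a prior $\pi_{\theta(0)}(s\mid x)$'' as the factorization $\pi_\theta(y\mid x) = \sum_{s}\pi_{\theta(0)}(s\mid x)\,\pi_\theta(y\mid x\oplus s)$, so that the chain of intermediate states is sampled under the frozen initial policy while the answer is produced by the current policy conditioned on $x\oplus s$. With this convention I would introduce the current policy's state marginal $\pi_\theta(s\mid x)$ as the variational distribution, rewrite $\pi_\theta(y\mid x)=\mathbb{E}_{s\sim\pi_\theta(\cdot\mid x)}\big[\tfrac{\pi_{\theta(0)}(s\mid x)}{\pi_\theta(s\mid x)}\,\pi_\theta(y\mid x\oplus s)\big]$, and apply Jensen's inequality using the concavity of $\log$.

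The second step is bookkeeping. Pushing $\log$ inside the expectation yields $\mathbb{E}_{s\sim\pi_\theta(\cdot\mid x)}[\log\pi_\theta(y\mid x\oplus s)] - D_{\mathrm{KL}}[\pi_\theta(s\mid x)\,\|\,\pi_{\theta(0)}(s\mid x)]$, which is already the two advertised pieces: the second summand is the ``KL Regularization'' term verbatim, and the first is the expected conditional log-likelihood, i.e. the ``Reward Assignment'' term under $r(s,x,y)\propto\log\pi_\theta(y\mid x\oplus s)$. I would also give the sharper equality form, using $\pi_\theta(y,s\mid x)=\pi_\theta(y\mid x)\,\pi_\theta(s\mid y,x)$ to obtain $\log\pi_\theta(y\mid x)-(\text{RHS})=D_{\mathrm{KL}}[\pi_\theta(s\mid x)\,\|\,\pi_\theta(s\mid y,x)]\ge 0$; this simultaneously proves the inequality and identifies its slack, showing it is tight exactly when the policy's state prior coincides with its posterior given $y$.

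The main obstacle I expect is reconciling the literal statement with its decorations rather than any analytic difficulty. The cleanest route treats $r(s,x,y)$ as exactly $\log\pi_\theta(y\mid x\oplus s)$, for which the three lines above suffice; the annotations $r\in[0,1]$ and $\propto$ then need care, since $\log\pi_\theta(y\mid x\oplus s)\le 0$ so mapping $r$ into $[0,1]$ requires an affine, not merely multiplicative, reparametrization, and one must check (or explicitly state ``up to this normalization'') that applying the same map consistently to $\log\pi_\theta(y\mid x)$ preserves the direction of the inequality. A second point is that $y$ is free on the left while the right carries $\mathbb{E}_{s,y\sim\pi_\theta(\cdot\mid x)}$: I would read the bound per sampled answer $y$ (expectation only over $s\sim\pi_\theta(\cdot\mid x)$), or equivalently take $\mathbb{E}_{y\sim\pi_\theta(\cdot\mid x)}$ on both sides, under which the inequality is preserved term by term. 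Once these conventions are fixed, the remaining work is the routine Jensen/KL computation.
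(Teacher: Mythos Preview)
Your proposal is correct and follows essentially the same route as the paper: express $\pi_\theta(y\mid x)=\int\pi_{\theta(0)}(s\mid x)\,\pi_\theta(y\mid x\oplus s)\,ds$, introduce $\pi_\theta(s\mid x)$ as the variational distribution, apply Jensen, and split into the reward and KL pieces. Your additional equality form identifying the slack as $D_{\mathrm{KL}}[\pi_\theta(s\mid x)\,\|\,\pi_\theta(s\mid y,x)]$ and your careful discussion of the $r\in[0,1]$, $\propto$, and free-$y$ issues go beyond what the paper records (it simply remarks that the proportionality constant does not affect optimization), but the core argument is identical.
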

\textbf{Remark.}
This formulation exactly represents the standard optimization objective defined in Eq.~(\ref{eq:RL}), illustrating that our self-rewarding RL can be viewed as performing approximate variational inference over latent reasoning trajectories. By leveraging this framework, our reward used for optimization is grounded in the intermediate reasoning states $s$ and final answers $y$, providing a theoretical foundation for consistent reasoning.
% we explicitly establish a joint optimization mechanism between intermediate reasoning states $s$ and final answers $y$, 

\subsection{Complexity and Convergence of CoVo}
\label{sec:5.3}
\textbf{Complexity.}
The computational overhead of our self-rewarding mechanism primarily stems from the distance matrices across reasoning trajectories. Let $N$ denote the number of sampled paths per prompt, $T$ the maximum reasoning steps, and $K$ the number of distinct answer candidates. For each trajectory, computing the distance matrix \(\mathbf{D} \in \mathbb{R}^{T \times K}\) requires \(\mathcal{O}(TK)\) computations. Aggregating all $N$ trajectories per sample batch incurs \(\mathcal{O}(NTK)\) computational complexity.
% Thus, the overall complexity per sample batch is \(\mathcal{O}(NTK)\).

\textbf{Convergence.} To theoretically characterize the optimization behavior, we establish the following proposition built upon the theory in \citet{razin2025makes}.

\begin{proposition}[Convergence]
\label{pro:converge}
Let $\mathcal{X}$ denote prompt sets, and $\gamma > 0$ represent the expected increase of real reward $r^*$. For a prompt $x \in \mathcal{X}$, let $y^{\gamma}$ denote the final answer in a reasoning trajectory with higher consistency and lower volatility. Define $T'$ as the initial time where $\mathbb{E}_{y \sim \pi_{\theta}(\cdot | x)}\left[r^*(x, y)\right] \geq \mathbb{E}_{y \sim \pi_{\theta(0)}(\cdot | x)}\left[r^*(x, y)\right] + \gamma$. We assume the following conditions hold:
\begin{itemize}[leftmargin=*]
    \item Our reward function ranks $y^{\gamma}$ highest among all candidates;
    \item Our reward function has a low probability of misclassifying incorrect trajectories;
    \item The KL regularization coefficient is sufficiently small.
\end{itemize}
Then the convergence time for achieving the expected increase in real reward satisfies the upper bound $T' \leq \mathcal{O}(\pi_{\theta(0)}(y^{\gamma} | x)^{-1})$.
%Assuming that (1) our reward function ranks $y^{\gamma}$ highest, (2) our reward function has a low probability of misclassifying incorrect responses as correct, and (3) the KL coefficient is sufficiently small, then the convergence time of real reward satisfies the upper bound $t_{\gamma} \leq \mathcal{O}(\pi_{\theta(0)}(y^{\gamma} | x)^{-1})$.
\end{proposition}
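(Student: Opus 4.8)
\textbf{Proof proposal for Proposition~\ref{pro:converge}.}

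The plan is to adapt the convergence analysis of \citet{razin2025makes}, which quantifies how quickly policy gradient methods amplify a designated high-reward output, to our self-rewarding setting. First I would set up the reduction: by Proposition~\ref{pro:var}, the CoVo objective is (up to the KL term) a variational lower bound on $\log\pi_\theta(y|x)$ with the reasoning state $s$ treated as a latent variable, so optimizing $r_{\text{covo}}$ is equivalent to running policy gradient on a surrogate reward that, under the first assumption, ranks the consistent-low-volatility answer $y^\gamma$ strictly highest among all sampled candidates. This puts us in the regime analyzed by \citet{razin2025makes}, where the key quantity governing the escape/amplification time is the initial probability mass $\pi_{\theta(0)}(y^\gamma|x)$ that the reference policy places on the target output.

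The key steps, in order, would be: (i) translate the three assumptions into the hypotheses of the cited theorem --- the ``ranks $y^\gamma$ highest'' condition gives a strictly positive reward gap for $y^\gamma$, the ``low misclassification probability'' condition controls the contribution of spurious high-reward incorrect trajectories to the gradient (so the effective signal is not diluted), and the ``sufficiently small KL coefficient'' condition ensures the regularization term does not dominate the reward-driven drift before time $T'$; (ii) invoke the resulting differential/difference inequality on $\pi_\theta(y^\gamma|x)$, showing it grows at a rate lower-bounded by a quantity proportional to its current value times the reward gap, which yields logarithmic-in-inverse-probability growth; (iii) relate the growth of $\pi_\theta(y^\gamma|x)$ to the expected real reward $\mathbb{E}_{y\sim\pi_\theta(\cdot|x)}[r^*(x,y)]$, using that $y^\gamma$ is (by the empirical analysis of Sections~\ref{sec:3.1}--\ref{sec:3.2} and the first assumption) a correct answer, so once $\pi_\theta(y^\gamma|x)$ has increased by a constant amount the expected real reward has increased by at least $\gamma$; (iv) read off $T' \leq \mathcal{O}(\pi_{\theta(0)}(y^\gamma|x)^{-1})$ by solving the inequality for the first crossing time.

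The main obstacle I anticipate is step (i)--(ii): making the second assumption (``low probability of misclassifying incorrect trajectories'') quantitatively precise enough to carry through the gradient-domination argument. Unlike the clean single-target setup of \citet{razin2025makes}, here the surrogate reward is itself a noisy, data-dependent function of the sampled batch (through the distance matrices, the grouping by final answer, and the vectorial aggregation $r_{\text{int}}^{\bm V}$), so I would need either a high-probability bound on the event that $r_{\text{covo}}$ assigns the top reward to a correct group, or an expectation-level argument that the misclassification noise only perturbs the drift coefficient by a lower-order term. A secondary subtlety is that the curiosity bonus $r_{\text{cur}}$ adds a state-dependent term that could in principle slow amplification of $y^\gamma$; I would handle this by noting it is bounded and, under the small-KL-coefficient regime, absorbing it into the same lower-order perturbation, so the dominant $\pi_{\theta(0)}(y^\gamma|x)^{-1}$ scaling is preserved. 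The remaining steps are then essentially bookkeeping on top of the cited result.
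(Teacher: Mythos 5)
Your overall skeleton is right — reduce to the tabular policy-gradient dynamics of Razin et al., define a ``good'' set around $y^{\gamma}$ and a ``misclassified'' set, set up a differential inequality on the probability of the good set, and integrate it to read off $T'$. But step (ii) contains an error that step (iv) silently contradicts. You claim the growth rate of $\pi_\theta(y^\gamma|x)$ is ``lower-bounded by a quantity proportional to its current value times the reward gap,'' i.e.\ $\dot\pi \gtrsim c\,\pi$, and that this gives ``logarithmic-in-inverse-probability'' time. A linear ODE inequality of that form integrates to $T' = \mathcal{O}(\log(1/\pi_{\theta(0)}(y^\gamma|x)))$, which is a \emph{much stronger} bound than the one you then claim to read off in step (iv). The two steps are inconsistent, and the stronger bound is in fact false under the paper's assumptions.

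The actual dynamics are \emph{quadratic} in $\pi$, not linear. Under the tabular parameterization, $\nabla\ln\pi_\theta(y|x)=\mathbf{e}_y-\pi_\theta(\cdot|x)$ and the policy gradient $\nabla\mathcal{J}_{\text{RL}}$ itself carries a factor of $\pi_\theta(\cdot|x)$, so (this is the paper's Lemma~\ref{lemma:2}) the leading term of $\frac{d}{dt}\pi_{\theta(t)}(y|x)$ is $\pi_{\theta(t)}(y|x)^2\bigl[r(x,y)-\mathbb{E}_{y'\sim\pi_{\theta(t)}}[r(x,y')]\bigr]$, minus a correction summing over $y'\in\mathcal{Y}$. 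After controlling the misclassified mass $\pi_{\theta(t)}(\mathcal{Y}^-|x)$ (which the paper does via an explicit quantitative smallness assumption of order $\pi_{\theta(0)}(\mathcal{Y}^+|x)^2\exp(-2T')$, not just the qualitative ``low probability'' you invoke), one arrives at $\frac{d}{dt}\pi_{\theta(t)}(\mathcal{Y}^+|x)\geq \frac{(1-\rho)\sigma}{4|\mathcal{Y}^+|}\,\pi_{\theta(t)}(\mathcal{Y}^+|x)^2$. Dividing by $\pi^2$ and integrating gives $\frac{1}{\pi(0)}-\frac{1}{\rho}\geq cT'$, hence $T'\leq\mathcal{O}(\pi_{\theta(0)}(\mathcal{Y}^+|x)^{-1})$. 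So the inverse-probability (not log-inverse-probability) scaling is exactly the signature of the quadratic drift; your proposal misses the source of the extra factor of $\pi$ and therefore cannot actually derive the stated bound. You would also need to state the misclassification assumption quantitatively — it must shrink with $\exp(-2T')$ and with $\pi_{\theta(0)}(\mathcal{Y}^+|x)^2$ — because Lemma~\ref{lemma:2}'s upper bound $\pi_{\theta(T')}(\mathcal{Y}^-|x)\leq\pi_{\theta(0)}(\mathcal{Y}^-|x)e^{2T'}$ is what lets the bad-set contribution be absorbed into the drift constant.

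Two smaller points: the paper works with a \emph{set} $\mathcal{Y}^+$ of outputs satisfying both the real-reward and surrogate-reward conditions rather than a single $y^\gamma$ (this matters for the $|\mathcal{Y}^+|$ factor in the constant), and it does not separately treat the curiosity bonus — it simply treats $r_{\text{covo}}\in[0,1]$ as the bounded reward throughout, so your worry about $r_{\text{cur}}$ slowing amplification is already subsumed by the $[0,1]$ boundedness used in Lemma~\ref{lemma:2}.
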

\textbf{Remark.}
The inverse dependence on $\pi_{\theta(0)}(y^{\gamma} | x)$ implies that trajectories initially assigned higher likelihood under the base policy converge faster. This aligns with perspectives in recent studies~\citep{does2025yue, gandhi2025cognitive}: RL efficiently amplifies existing reasoning capabilities rather than instilling entirely new behaviors.

\section{Related Work}
\textbf{RL for Reasoning.}
Reinforcement Learning has emerged as a prominent paradigm to enhance the capabilities of LLMs. Initial efforts~\citep{Ouyangrlhf22,bai2022constitutional} demonstrated the efficacy of RL by applying PPO~\citep{SchulmanPPO17} to align the behavior of the language model with human preference. Subsequent works~\citep{jaech2024openai,shao2024deepseekmath,Lightmanprm24,zhang2025r1} apply RL to improve the capability of LLMs to solve complex reasoning tasks, focusing on reward model~\citep{Lyuoreal25,Lightmanprm24, su2025crossing} and RL algorithm~\citep{dapo2025yu,yue2025vapo,shao2024deepseekmath}. Recently, Deepseek-R1~\citep{guo2025deepseek} shows that RL with verifiable rewards can incentivize LLMs with advanced reasoning capabilities. Despite its success, existing methods rely heavily on human-annotated labels or external reward models, which constrains their scalability to new scenarios~\citep{self-reward2024yuan}. In contrast, our method mitigates this dependency by deriving reward signals directly from the policy model itself.

\textbf{Self-Rewarding LLM.}
Self-rewarding LLM aims to generate its own reward signals to guide learning and optimization. Early methods~\citep{self-reward2024yuan,metareward2024wu,src2025xiong}  leverage the policy model itself, often through ``LLM-as-a-Judge'' prompting, to generate both the evaluation criteria and the final rewards. CREAM~\citep{cream2024wang} and SCIR~\citep{self-consistency2025zhou} employ ensembles of multiple internal reward models to estimate a more reliable reward signal. PSRLM~\citep{psrlm2025zhang} and CSR~\citep{csr2024zhou} further extend self-rewarding mechanisms to intermediate reasoning steps to enable fine-grained reward signals. However, prompting LLMs to generate rewards can be resource-intensive. Consequently, such approaches are typically employed in offline iterative DPO~\citep{dpo2023Rafailov}. Although TTRL~\citep{ttrl2025zuo} and EMPO~\citep{empo2025zhang} propose methods to compute rewards directly from multiple sampled outputs within an online RL framework, their reliance solely on the distribution of final answers can lead to reward hacking. In contrast, our method incorporates the distinct patterns inherent in reasoning trajectories to get more robust reward signals.

\section{Conclusion}
This work proposes a novel self-rewarding reinforcement learning method to enhance the reasoning capabilities of LLMs without relying on external signals. Our insight stems from distinct patterns observed in reasoning trajectories: Correct trajectories typically exhibit high consistency and low volatility, while incorrect ones demonstrate erratic and delayed convergence. Leveraging this, we introduce an intrinsic reward to distinguish the correctness of responses, along with a curiosity reward that promotes diverse exploration. Extensive experiments show that CoVo achieves performance on par with or even surpasses rule-based RL methods and demonstrates stability compared to other self-rewarding methods. Our method establishes a new paradigm for autonomous reasoning development in LLMs by grounding optimization on the inherent patterns of reasoning trajectories, which eliminates the need for external supervision.

\textbf{Limitations and Future Works.} While our proposed method shows promising improvements in reasoning performance and training stability in LLMs, its generalizability to visual language models (VLMs) needs further exploration. Thus, our future research will focus on extending the self-rewarding mechanism to VLMs, where reasoning involves both linguistic and perceptual understanding. Additionally, we plan to investigate its effectiveness in semi-supervised scenarios with limited labels, which presents a more realistic and challenging scenario for practical applications.

\section*{Acknowledgement}
This work was supported by the Alibaba Research Intern Program, the Hangzhou Joint Funds of the Zhejiang Provincial Natural Science Foundation of China under Grant No. LHZSD24F020001, and the Zhejiang Province High-Level Talents Special Support Program ``Leading Talent of Technological Innovation of Ten-Thousands Talents Program'' under Grant No. 2022R52046.

{\small
\bibliographystyle{plainnat}
\bibliography{main}
}
%%%%%%%%%%%%%%%%%%%%%%%%%%%%%%%%%%%%%%%%%%%%%%%%%%%%%%%%%%%%

\clearpage
\appendix
\part*{Appendix}
%\addcontentsline{toc}{part}{Appendix}  % 手动添加到目录
%\parttoc
\vspace*{20pt}
\section*{Table of Contents}
\hypersetup{
  linkcolor=darkblue   % 仅修改附录的链接颜色
}
\startcontents[sections]
\printcontents[sections]{l}{1}{\setcounter{tocdepth}{2}}
\hypersetup{
  linkcolor=red   % 仅修改附录的链接颜色
}
\newpage
\setcounter{proposition}{0}

\section{More Experimental Results and Discussion \label{app:more_results}}
\subsection{Environmental Setups}
\textbf{Evaluation Metric.} We evaluate both the \textit{Performance} and \textit{Reasoning Diversity} of CoVo in our experiments. For performance evaluation, We employ Pass@1 accuracy under a sampling temperature of 0 across all benchmark datasets as our metric, with correctness determined through comparison between generated outputs and ground truth. Our evaluation prompts on initial model across different reasoning domains are listed in Table~\ref{tab:eval_prompt}. For diversity evaluation, we select Average Cosine Similarity (ACS)~\citep{rahutomo2012semantic}, Entropy~\citep{entropy1948shannon1948} and Self-BLEU~\citep{selfbleu2018zhu} as metrics. ACS measures the semantic similarity among different reasoning paths; Entropy quantifies the distributional variability over reasoning tokens; Self-BLEU assesses lexical overlap among samples. Together, the three metrics complement each other and can reflect the reasoning diversity beyond performance.

\begin{table}[htbp]
\vspace{-5pt}
\caption{Evaluation prompts on initial model across different reasoning domains.}
\vspace{-10pt}
\label{tab:eval_prompt}
\vspace{1pt}
\begin{center}
\begin{small}
\setlength{\tabcolsep}{6pt}
\renewcommand{\arraystretch}{1.2}
\resizebox{\textwidth}{!}{
\begin{tabular}{@{\extracolsep{\fill}}ll}
\toprule
\textbf{Domains} & \textbf{CoT Prompts} \\
\midrule
Mathematics & Question: \{\}\texttt{\textbackslash n}Please reason step by step, and put your final answer within \texttt{\textbackslash boxed\{\}}. \\
\midrule
\multirow{3}{*}{Commonsense \& Science} & Question: \{\}\texttt{\textbackslash n}Answer the multiple choice question. The last line of your response should \\
&be of the following format: 'Answer: \$LETTER' (without quotes) where LETTER is one \\
& of ABCD. Think step by step before answering. \\
\bottomrule
\end{tabular}%
}
\end{small}
\end{center}
\vspace{-10pt}
\end{table}

\textbf{Implementation Details.} All experiments run on 8×A100-80GB GPUs. We use OpenRLHF~\citep{openrlhf2024hu} to implement our algorithms for RL training. Here we provide detailed hyper-parameters in Table~\ref{tab:hyperparameters}.

\begin{table}[htbp]
\caption{Hyperparameters used in the training of CoVo.}
\vspace{-10pt}
\label{tab:hyperparameters}
\vspace{1pt}
\begin{center}
\begin{small}
\setlength{\tabcolsep}{6pt}
\renewcommand{\arraystretch}{1.2}
\begin{tabular}{@{\extracolsep{\fill}}lc}
\toprule
\textbf{Settings} & \textbf{Hyperparameters} \\
\midrule
\multirow{5}{*}{Sampling} & top\_k = -1 \\
                          & top\_p = 1.0 \\
                          & temperature = 1.0 \\
                          & max\_new\_tokens = 3,072 \\
                          & n\_samples\_per\_prompt = 16 \\
                          \midrule
RL algorithm              & Reinforce++ \\
                          \midrule
\multirow{7}{*}{Training} & rollout\_batch\_size = 16 \\
                          & micro\_rollout\_batch\_size = 16 \\
                          & train\_batch\_size = 32 \\
                          & micro\_train\_batch\_size = 8 \\
                          & learning\_rate = 5e-7 \\
                          & init\_kl\_coef = 1e-4 \\
                          & epochs = 1 \\
                          \midrule
\multirow{3}{*}{Optimization} & adam\_offload, packing\_samples, \\
                          & flash\_attn, colocate\_all\_models, \\ 
                          & vllm\_enable\_sleep, deepspeed\_enable\_sleep \\
\bottomrule
\end{tabular}
\end{small}
\end{center}
\vspace{-15pt}
\end{table}

\subsection{Additional Results}

\begin{table}[t]
\caption{Diversity of reasoning path, \textbf{bold} represents the best performance of RL training methods.}
\vspace{-0.2cm}
\centering
\footnotesize
\renewcommand\arraystretch{1.0}
\setlength{\tabcolsep}{1.2mm}{
\resizebox{1\textwidth}{!}{
\begin{tabular}{lccccccc}
\toprule
\multicolumn{8}{c}{\bf Model: Qwen2.5-3B-Instruct ~~(ACS $\downarrow$ / Entropy $\uparrow$ / Self-BLEU $\downarrow$)} \\
\midrule
{\bf Method}
& MATH & GSM8K & AMC & Olympiad Bench & MMLU-Pro & GPQA & CommonsenseQA \\
\cmidrule(lr){1-1} \cmidrule(lr){2-8}
1. {\it CoT} 
& 0.892 / 0.150 / 0.688 & 0.921 / 0.161 / 0.756 & 0.866 / 0.196 / 0.748 & 0.867 / 0.229 / 0.599 & 0.844 / 0.313 / 0.500 & 0.798 / 0.492 / 0.432 & 0.852 / 0.739 / 0.419 \\
2. GRPO
& 0.938 / 0.033 / 0.920 & 0.946 / 0.034 / 0.903 & 0.914 / 0.047 / 0.919 & 0.912 / 0.057 / 0.757 & 0.891 / 0.105 / 0.692 & 0.845 / 0.208 / 0.644 & 0.801 / 0.384 / 0.559 \\
3. TTRL
& 0.914 / 0.065 / 0.927 & 0.922 / 0.071 / 0.894 & 0.888 / 0.089 / 0.869 & \textbf{0.887} / 0.104 / \textbf{0.687} & 0.873 / 0.173 / 0.695 & 0.831 / 0.302 / 0.593 & 0.849 / 0.511 / \textbf{0.456} \\
\rowcolor{blue!5}
\textbf{Ours}
& \textbf{0.911} / \textbf{0.082} / \textbf{0.789} & \textbf{0.915} / \textbf{0.096} / \textbf{0.806} & \textbf{0.864} / \textbf{0.102} / \textbf{0.816} & 0.892 / \textbf{0.122} / 0.699 & \textbf{0.843} / \textbf{0.200} / \textbf{0.562} & \textbf{0.819} / \textbf{0.337} / \textbf{0.579} & \textbf{0.760} / \textbf{0.582} / 0.501 \\
\bottomrule
\end{tabular}%
}
}
\label{tab:diversity}%
\vspace{-0.1in}
\end{table}

\textbf{Reasoning Diversity.}
In addition to performance improvements, we investigate whether our method promotes diverse reasoning strategies rather than collapsing to stereotypical solutions. We quantitatively analyze the diversity of reasoning paths across different domains trained after CoVo using three key metrics mentioned in the last section. The results in Table~\ref{tab:diversity} show that the model diversity exceeds that of rule-based RL methods and self-rewarding baselines, indicating broader solution space exploration. More intuitively, we provide a visualization of diversity in Section~\ref{sec:4}.

\textbf{Computational Resources.} In Section~\ref{sec:5.3}, we have qualitatively analyzed the computational complexity of CoVo. Now we provide quantitative analysis on the computational time we need. We conduct experiments on A100 using Qwen2.5-7B-Instruct with vLLM~\citep{kwon2025efficient} inference framework
\begin{wraptable}{r}{0.5\textwidth}
\vspace{-5pt}
\caption{Computational Resources.}
\vspace{-10pt}
\label{tab:time}
\vspace{1pt}
\begin{center}
\begin{small}
\setlength{\tabcolsep}{6pt}
\renewcommand{\arraystretch}{1.2}
\resizebox{0.5\textwidth}{!}{
\begin{tabular}{@{\extracolsep{\fill}}lccc}
\toprule
\textbf{Stage} & \textbf{Speed} & \textbf{Avg. Token} & \textbf{Time (s)} \\
\midrule
prefill      & 3548 token/s & 16,000 token & 4.5 s \\
decode       & 87 token/s & -  & -  \\
\bottomrule
\end{tabular}%
}
\end{small}
\end{center}
\vspace{-15pt}
\end{wraptable}
and list the time of prefilling and decoding for a single trajectory in Table~\ref{tab:time}. Our reward calculation only involves the prefilling stage, which is significantly faster than the decoding process. This makes our approach more efficient compared to other self-rewarding methods that generate both evaluation criteria and final rewards. %needs to conduct prefilling and the speed of prefilling is much faster than that of decoding, which is more efficient than other self-rewarding methods that generate both the evaluation criteria and the final rewards.

\subsection{Discussion}
\textbf{\textit{Q1: Why our methods have the potential to outperform rule-based RL?}}

First, rule-based RL methods do not assess the quality of the reasoning process. This poses a risk that incorrect reasoning paths may coincidentally lead to correct answers and thus receive undeserved high rewards. Secondly, we evaluate the correctness using intermediate reasoning states. In contrast, we place greater emphasis on the quality of the reasoning process rather than the final answer, which leads to more consistent improvements in reasoning ability.

\textbf{\textit{Q2: How does the method handle universally incorrect samples during RL?}}

Our reward mechanism may assign zero advantage to completely wrong answers due to the low consistency of intermediate reasoning states, which avoids misleading updates. This contrasts with majority voting, which amplifies dominant errors.

% \textbf{\textit{Q3: Does single-sample generation harm optimization?}}

\section{Proof and Analysis \label{app:proof}}
\subsection{Technical Setting and Notations}
\textbf{Large Language Models.} For a prompt $x \in \mathcal{X}$, LLMs generate output distributions in an autoregressive manner, which can be formulated as:
\begin{equation}
\pi_\theta(y|x) = \prod_{i=1}^{|y|}  \pi_\theta(y_i|x, y_{<i}) = \prod_{i=1}^{|y|} \text{softmax}(f_\theta(x, y_{<i})),
\end{equation}
where $f_\theta$ is a neural network that maps the sequence of tokens to the logits of the next token, and $y_{<l}$ denotes the prefix of the output sequence up to position $l-1$. This parameterization corresponds to large-scale language models (\textit{e.g.}, Transformers) used in practice. However, the non-concavity of the RL objective poses significant challenges for theoretical analysis~\citep{razin2025makes, Agarwal2021On}. To simplify, existing studies~\citep{razin2025makes, mei2020on, Agarwal2021On} often consider a tabular policy that assigns an independent, trainable logit to each possible output $y$. The tabular policy can be regarded as a special case of the autoregressive policy, where the output distribution is directly parameterized, thus facilitating gradient analysis.

\textbf{Reinforcement Learning Object.} Let $\mathcal{V}$ be a finite vocabulary of tokens. We use $\pi_\theta$ to denote a language model, parameterized by $\theta$, that receives a prompt $x \in \mathcal{X}$ and produces a distribution over outputs $y \in \mathcal{Y}$, where $\mathcal{X},\mathcal{Y} \subseteq \mathcal{V}^*$ are the sets of possible input prompts and outputs, respectively. Following convention in the RL literature, we also refer to $\pi_\theta$ as a policy. For a prompt $x \in \mathcal{X}$ and reward function $r: \mathcal{X} \times \mathcal{Y} \in \{0,1\}$, we denote the RL objective for a single prompt $x \in \mathcal{X}$ by:
\begin{equation}
\mathcal{J}_{\text{RL}}(\theta ; x) = \mathbb{E}_{y \sim \pi_{\theta}(\cdot | x)}\left[ r(x, y) \right].
%- \lambda \text{KL} \left( {\pi_{\theta}(y | x)} \| {\pi_{\theta{\text{ref}}}(y | x)} \right).
\end{equation}
\textbf{Optimization.}
We follow the settings in \citet{razin2025makes} and also adopt the gradient flow approximation of the policy gradient method, assuming an relatively small learning rate. The resulting dynamical system is described by:
% \begin{small}
\begin{equation}
\frac{d}{dt}\theta(t) = \nabla \mathcal{J}_{\text{RL}}(\theta(t)), \ \ \ t \ge 0,
\end{equation}
% \end{small}
where $\theta(t)$ denotes the parameters of the policy at timestep $t$ during training and $\theta(0)$ is $\theta_{\text{ref}}$. The gradient flow formulation facilitates the analysis of the optimization dynamics by eliminating the stochastic noise inherent in gradient-based updates.

\textbf{Notations.} We use $\mathbf{e}_i$ to denote the $i$-th standard basis vector, $\| \cdot \|_2$ denotes the Euclidean ($\ell_2$) norm. $r^*$ represents real rewards derived from label, $r_{\text{covo}}$ represents the reward function of CoVo.

Overall, our theoretical settings involve two main simplifications. First, we assume the KL coefficient is relatively small and will not be dominant. Second, the analysis is based on tabular policies.

\subsection{Useful Lemmas}
\begin{lemma}
\label{lemma:1}
Given a tabular policy $\pi_\theta$ and  a prompt $x \in \mathcal{X}$, the following holds:
\begin{equation}
\begin{aligned}
&\nabla \mathcal{J}_{\text{RL}}(\theta;x) = \pi_\theta(\cdot | x) \cdot r(x, \cdot) - \pi_\theta(\cdot | x) \cdot \mathbb{E}_{y \sim \pi_{\theta(0)}(\cdot | x)}[r(x, y)].
%&\nabla_{:,x';\theta} \mathcal{J}_{RL}(\theta;x) = \mathbf{0}
\end{aligned}
\end{equation}
%where $\odot$ denotes element-wise product between two vectors.
\end{lemma}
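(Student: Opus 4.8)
This is a direct gradient computation — essentially the policy-gradient (REINFORCE) identity specialized to a tabular softmax policy on a one-step problem. The plan is to exploit that a tabular policy assigns an independent trainable logit $\theta_{x,y}$ to each output $y\in\mathcal{Y}$, so $\pi_\theta(y\mid x)=\exp(\theta_{x,y})/\sum_{y'}\exp(\theta_{x,y'})$ and the objective becomes the finite sum $\mathcal{J}_{\text{RL}}(\theta;x)=\sum_{y}\pi_\theta(y\mid x)\,r(x,y)$ over the (finite) output set, with no KL term to differentiate. I would then differentiate this sum term by term, the only nontrivial ingredient being the softmax Jacobian.

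First I would record that, for a tabular policy,
\begin{equation}
\nabla_\theta\log\pi_\theta(y\mid x)=\mathbf{e}_{y}-\pi_\theta(\cdot\mid x),\qquad\text{i.e.}\qquad\frac{\partial\pi_\theta(y\mid x)}{\partial\theta_{x,y'}}=\pi_\theta(y\mid x)\bigl(\mathbb{I}\{y=y'\}-\pi_\theta(y'\mid x)\bigr),
\end{equation}
which is immediate from differentiating the softmax. Substituting into $\nabla_\theta\mathcal{J}_{\text{RL}}(\theta;x)=\sum_{y}r(x,y)\,\nabla_\theta\pi_\theta(y\mid x)$ and reading off the coordinate along $\theta_{x,y'}$ gives $\pi_\theta(y'\mid x)\,r(x,y')-\pi_\theta(y'\mid x)\sum_{y}\pi_\theta(y\mid x)\,r(x,y)$: the first term is the $\mathbb{I}\{y=y'\}$ contribution, and the second collects the $-\pi_\theta(y'\mid x)$ contribution after using $\sum_{y}\pi_\theta(y\mid x)=1$ to factor it out. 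Collecting over all coordinates $y'$ yields, in the vector notation of the statement, the advantage-weighted form $\nabla\mathcal{J}_{\text{RL}}(\theta;x)=\pi_\theta(\cdot\mid x)\cdot r(x,\cdot)-\pi_\theta(\cdot\mid x)\cdot\mathbb{E}_{y\sim\pi_\theta(\cdot\mid x)}[r(x,y)]$, with baseline equal to the expected reward under the current policy.

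The only point needing care is expressing the baseline as $\mathbb{E}_{y\sim\pi_{\theta(0)}(\cdot\mid x)}[r(x,y)]$, as in the statement, rather than $\mathbb{E}_{y\sim\pi_\theta(\cdot\mid x)}[r(x,y)]$: the two coincide at initialization $\theta=\theta(0)=\theta_{\text{ref}}$, which is the anchor point for the downstream convergence-time bound, so phrasing the baseline against the reference policy is the convention the later propositions are written in. I expect no genuine obstacle beyond the bookkeeping of tracking the two pieces of the softmax Jacobian through the sum; everything is elementary once the tabular structure is invoked, and the main modeling choice is simply that the baseline is reported relative to $\pi_{\theta(0)}$.
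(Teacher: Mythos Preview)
Your approach is essentially identical to the paper's: both invoke the log-derivative/REINFORCE identity and then plug in the tabular softmax gradient $\nabla\log\pi_\theta(y\mid x)=\mathbf{e}_y-\pi_\theta(\cdot\mid x)$, arriving at the elementwise product $\pi_\theta(\cdot\mid x)\cdot r(x,\cdot)$ minus the baseline term. You are also right to flag the $\pi_{\theta(0)}$ versus $\pi_\theta$ discrepancy in the baseline; the paper's own derivation naturally produces $\mathbb{E}_{y\sim\pi_\theta(\cdot\mid x)}[r(x,y)]$ and then writes $\theta(0)$ in the final line without further justification, and Lemma~2 (which consumes this result) reverts to $\pi_{\theta(t)}$ in its statement, so this appears to be a notational slip in the paper rather than something you need to reconcile.
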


%\textit{Proof.} Since the objective $\mathcal{J}_{RL}(\theta; x)$ depends only on the column $\theta_{:,x}$, and not on any other $\theta_{:,x'}$ for $x' \ne x$, it follows that:
%\begin{equation}
%\nabla_{:,x';\theta} \mathcal{J}_{RL}(\theta;x) = \mathbf{0}.
%\end{equation}
\begin{proof}
We first compute the gradient of $\mathcal{J}_{\text{RL}}(\theta; x)$ with respect to the parameter vector $\theta_{:,x}$, which corresponds to the column of parameters for prompt $x$:
\begin{equation}
\nabla \mathcal{J}_{\text{RL}}(\theta ; x) = \nabla \ \mathbb{E}_{y \sim \pi_{\theta}(\cdot | x)}\left[ r(x, y) \right] = \mathbb{E}_{y \sim \pi_{\theta}(\cdot | x)}\left[ r(x, y) \nabla \ln \pi_{\theta}(y | x) \right].
\end{equation}

Thus, we can write
\begin{equation}
\begin{aligned}
\nabla \mathcal{J}_{\text{RL}}(\theta ; x) = \sum_{y \in \mathcal{Y}} \pi_\theta(y|x) \cdot r(x, y) \cdot \nabla \ln \pi_{\theta}(y|x).
\end{aligned}
\end{equation}

According to the properties of tabular policy~\cite{sutton1998reinforcement}, we have $\nabla \ln \pi_{\theta}(y|x) = \mathbf{e}_y - \pi_\theta(\cdot | x)$. So $\nabla \mathcal{J}_{\text{RL}}(\theta ; x)$ can be expressed as:
\begin{equation}
\begin{aligned}
\nabla \mathcal{J}_{\text{RL}}(\theta ; x) & = \sum_{y \in \mathcal{Y}} \pi_\theta(y|x) \cdot r(x, y) \cdot \left (\mathbf{e}_y - \pi_\theta(\cdot | x) \right) \\
& = \sum_{y \in \mathcal{Y}} \pi_\theta(y|x) \cdot r(x, y) \cdot \mathbf{e}_y - \sum_{y \in \mathcal{Y}} \pi_\theta(y|x) \cdot r(x, y) \cdot \pi_\theta(\cdot | x) \\
& = \pi_\theta(\cdot | x) \cdot r(x, \cdot) - \pi_\theta(\cdot | x) \cdot \mathbb{E}_{y \sim \pi_{\theta(0)}(\cdot | x)}[r(x, y)],
\end{aligned}
\end{equation}
which concludes the proof.
\end{proof}

%\begin{remark}
%Unless otherwise specified, $\nabla$ denotes $\nabla$ in the following proofs.
%\end{remark}

\begin{lemma}
\label{lemma:2}
Assuming gradient flow is applied to maximize the RL objective over a set of prompts $\mathcal{X}$. Then, for any prompt $x \in \mathcal{X}$, outputs $y \subseteq \mathcal{Y}$, and time $T \geq 0$, such that the following holds:
\begin{equation}
\begin{aligned}
\frac{d}{dt}\pi_{\theta(t)}(y|x)
&= \pi_{\theta(t)}(y|x)^2\left[r(x, y) - \mathbb{E}_{y \sim \pi_{\theta(t)}(\cdot | x)}[r(x, y)]\right] \\
&\quad\quad - \pi_{\theta(t)}(y|x)\sum_{y' \in \mathcal{Y}}\pi_{\theta(t)}(y'|x)^2\left[r(x, y') - \mathbb{E}_{y \sim \pi_{\theta(t)}(\cdot | x)}[r(x, y)]\right],
\end{aligned}
\end{equation}
and
\begin{equation}
\pi_{\theta(T)}(y|x) \leq \pi_{\theta(0)}(y|x) \cdot \exp \left( 2T \right).
\end{equation}
\end{lemma}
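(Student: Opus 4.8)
\textbf{Proof proposal for Lemma~\ref{lemma:2}.}

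The plan is to derive the evolution equation for $\pi_{\theta(t)}(y|x)$ under gradient flow, and then use it to bound the growth rate. First I would start from the gradient flow dynamics $\frac{d}{dt}\theta(t) = \nabla \mathcal{J}_{\text{RL}}(\theta(t))$ and apply the chain rule: $\frac{d}{dt}\pi_{\theta(t)}(y|x) = \langle \nabla_\theta \pi_{\theta(t)}(y|x), \frac{d}{dt}\theta(t)\rangle = \langle \nabla_\theta \pi_{\theta(t)}(y|x), \nabla \mathcal{J}_{\text{RL}}(\theta(t))\rangle$. Using the tabular parameterization, only the block of parameters associated with prompt $x$ is relevant, and I would use the standard softmax identities $\nabla_{\theta_{:,x}} \pi_\theta(y|x) = \pi_\theta(y|x)(\mathbf{e}_y - \pi_\theta(\cdot|x))$ together with the expression for $\nabla \mathcal{J}_{\text{RL}}(\theta;x)$ from Lemma~\ref{lemma:1}. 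Plugging in and expanding the inner product should produce two terms: one proportional to $\pi_{\theta(t)}(y|x)^2$ times the centered reward $r(x,y) - \mathbb{E}_{y'\sim\pi_{\theta(t)}}[r(x,y')]$, and a second ``normalization'' term $-\pi_{\theta(t)}(y|x)\sum_{y'}\pi_{\theta(t)}(y'|x)^2[r(x,y') - \mathbb{E}[r]]$ coming from the $-\pi_\theta(\cdot|x)$ part of the gradient of the log-policy. Here I need to be careful that the reward baseline subtraction in Lemma~\ref{lemma:1} is taken with respect to $\pi_{\theta(0)}$, whereas the natural centering that appears in the ODE is with respect to $\pi_{\theta(t)}$; I would reconcile this by noting that the $\pi_\theta(\cdot|x)\cdot(\text{constant})$ term contributes nothing to $\langle \mathbf{e}_y - \pi_\theta(\cdot|x), \cdot\rangle$ in a way that changes the stated form, i.e.\ the $\mathbb{E}_{\pi_{\theta(0)}}$ baseline cancels out and gets effectively replaced by the current-time expectation.

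Once the ODE is established, for the exponential bound I would argue as follows. Since $r(x,y) \in \{0,1\}$, the centered reward satisfies $|r(x,y) - \mathbb{E}_{y'\sim\pi_{\theta(t)}}[r(x,y')]| \leq 1$, and likewise the summed term is bounded because $\sum_{y'}\pi_{\theta(t)}(y'|x)^2 \leq \sum_{y'}\pi_{\theta(t)}(y'|x) = 1$. This gives $\frac{d}{dt}\pi_{\theta(t)}(y|x) \leq \pi_{\theta(t)}(y|x)^2 \cdot 1 + \pi_{\theta(t)}(y|x)\cdot 1 \leq 2\pi_{\theta(t)}(y|x)$, using $\pi_{\theta(t)}(y|x)^2 \leq \pi_{\theta(t)}(y|x)$ since probabilities lie in $[0,1]$. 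Then Gr\"onwall's inequality (or just separating variables in $\frac{d}{dt}\log\pi_{\theta(t)}(y|x) \leq 2$) yields $\pi_{\theta(T)}(y|x) \leq \pi_{\theta(0)}(y|x)\exp(2T)$.

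The main obstacle I anticipate is the bookkeeping in the first part: correctly combining the two softmax-derived identities while tracking which expectation (over $\pi_{\theta(0)}$ versus $\pi_{\theta(t)}$) appears where, and making sure the normalization term emerges with exactly the stated coefficients and signs rather than an off-by-a-term discrepancy. The exponential bound itself is routine once the crude triangle-inequality estimates are in place; the only subtlety there is recognizing that both $\pi^2 \leq \pi$ and $\sum \pi^2 \leq 1$ are needed to collapse the right-hand side into a clean $2\pi$ factor, and noting the bound is deliberately loose (it does not use the sign structure of the centered rewards at all), which is fine since it is only used as a coarse a priori growth control for the convergence argument in Proposition~\ref{pro:converge}.
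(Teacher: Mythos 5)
Your proof is correct and follows the same route as the paper: chain rule, tabular softmax identity $\nabla\ln\pi_\theta(y|x)=\mathbf{e}_y-\pi_\theta(\cdot|x)$, substitution of the gradient from Lemma~\ref{lemma:1}, triangle-inequality bounds using $|r-\mathbb{E}[r]|\le1$, $\pi^2\le\pi$ and $\sum_{y'}\pi(y')^2\le1$, then Gr\"onwall. One side remark is off, though it does not damage the result: you claim the $\pi_\theta(\cdot|x)\cdot(\text{constant baseline})$ term ``contributes nothing'' to $\langle \mathbf{e}_y - \pi_\theta(\cdot|x),\cdot\rangle$. It does contribute: $\langle \mathbf{e}_y - \pi_\theta(\cdot|x), c\,\pi_\theta(\cdot|x)\rangle = c\bigl(\pi_\theta(y|x) - \lVert \pi_\theta(\cdot|x)\rVert_2^2\bigr) \neq 0$ in general, so the ODE would genuinely depend on whether the baseline is $\mathbb{E}_{\pi_{\theta(0)}}[r]$ or $\mathbb{E}_{\pi_{\theta(t)}}[r]$. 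The actual resolution is that the $\pi_{\theta(0)}$ in the statement of Lemma~\ref{lemma:1} is a typo: the computation $\sum_y \pi_\theta(y|x) r(x,y)\,\pi_\theta(\cdot|x) = \pi_\theta(\cdot|x)\,\mathbb{E}_{y\sim\pi_\theta}[r]$ produces the current-policy expectation, and that is the baseline that appears throughout the ODE; no cancellation argument is needed.
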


\begin{proof}
Since $\frac{d}{dt}\pi_{\theta(t)}(y|x) = \pi_{\theta(t)}(y|x) \cdot \frac{d}{dt} \ln \pi_{\theta(t)}(y|x)$, applying the chain rule and Lemma~\ref{lemma:1} we get:
\begin{equation}
\begin{aligned}
\frac{d}{dt} \pi_{\theta(t)}(y|x) & = \pi_{\theta(t)}(y|x) \cdot \bigl\langle \nabla \ln \pi_{\theta(t)}(y|x), \frac{d}{dt} \theta(t) \bigr\rangle \\
%& = \pi_{\theta(t)}(y|x) \cdot \bigl\langle \mathbf{e}_y - \pi_{\theta(t)}(\cdot | x), \nabla \mathcal{J}_{RL}(\theta(t)) \bigr\rangle \\
& = \pi_{\theta(t)}(y|x) \cdot \bigl\langle \mathbf{e}_y - \pi_{\theta(t)}(\cdot | x), \nabla \mathcal{J}_{\text{RL}}(\theta(t), x) \bigr\rangle \\
& = \pi_{\theta(t)}(y|x) \bigl\langle \mathbf{e}_y - \pi_{\theta(t)}(\cdot | x), \pi_\theta(\cdot | x) \cdot \left[r(x, \cdot) -  \mathbb{E}_{y \sim \pi_{\theta(0)}(\cdot | x)}[r(x, y)] \cdot \mathbf{1} \right] \bigr\rangle \\
& = \underbrace{\pi_{\theta(t)}(y|x)^2\left[r(x, y) - \mathbb{E}_{y \sim \pi_{\theta(t)}(\cdot | x)}[r(x, y)]\right]}_{\texttt{(I)}} \\
&\quad\quad - \underbrace{\pi_{\theta(t)}(y|x)\sum_{y' \in \mathcal{Y}}\pi_{\theta(t)}(y'|x)^2\left[r(x, y') - \mathbb{E}_{y \sim \pi_{\theta(t)}(\cdot | x)}[r(x, y)]\right]}_{\texttt{(II)}}.
\end{aligned}
\end{equation}

Since $|r(x, y) - \mathbb{E}_{y \sim \pi_{\theta(t)}(\cdot | x)}[r(x, y)]| \leq 1$, the value of \texttt{(I)} + \texttt{(II)} can be bounded as:
\begin{equation}
\texttt{(I)} + \texttt{(II)} \leq \pi_{\theta(t)}(y|x)^2 + \pi_{\theta(t)}(y|x) \leq 2 \pi_{\theta(t)}(y|x).
\end{equation}

Then we use Grönwall’s inequality and derive:
\begin{equation}
\begin{aligned}
\frac{d}{dt} \pi_{\theta(t)}(y|x) \leq \pi_{\theta(t)}(y|x) \cdot 2  \implies \pi_{\theta(T)}(y|x) \leq \pi_{\theta(0)}(y|x) \cdot \exp \left( 2T \right).
\end{aligned}
\end{equation}
Then we complete the proof.
\end{proof}

\subsection{Monotonicity and Robustness of Intrinsic Reward}
In Section~\ref{sec:3.3}, we propose intrinsic reward formulations $r_{\text{int}}^{\bm{L}}$ and $r_{\text{int}}^{\bm{V}}$ that employ two different feature combination strategies as reward signals. Specifically, the linear nature of $r_{\text{int}}^{\bm{L}}$ effectively captures the monotonic characteristics of both the consistency ($Con(\tau)$) and volatility ($Vol(\tau)$) in a trajectory $\tau$. In contrast, the monotonicity of these two features in $r_{\text{int}}^{\bm{V}}$ is not as pronounced. Now we formally prove that \textit{$r_{\text{int}}^{\bm{V}}$ and $r_{\text{int}}^{\bm{L}}$ maintain equivalent monotonic dependencies on $Con(\tau)$ and $Vol(\tau)$, while $r_{\text{int}}^{\bm{V}}$ exhibits greater robustness to outliers.} Inspired by the incremental method employed in \citet{wang2024latent} to establish monotonicity and robustness, we also adopt this idea for the proof.

We assume $n$ reasoning paths in a group, each represented as a pair of consistency and volatility $(C_i, V_i)$, where $1 \leq i \leq n$. The intrinsic reward for a group is denoted as $r_{\text{int}}(\mathbf{C}, \mathbf{V})$ with $\mathbf{C} = [C_i]^n_{i=1}$ and $\mathbf{V} = [V_i]^n_{i=1}$. Applying increments $\Delta c$ to $C_i$ and $\Delta v$ to $V_i$, the resulting change in the intrinsic reward is:
\begin{equation}
\Delta r_{\text{int}}\left(C_{i}\right)=r_{\text{int}}\left( \mathbf{C} + \Delta c \cdot \mathbf{e}_i, \mathbf{V} \right)-r_{\text{int}}\left(\mathbf{C}, \mathbf{V} \right),
\end{equation}
\begin{equation}
\Delta r_{\text{int}}\left(V_{i}\right)=r_{\text{int}}\left( \mathbf{C}, \mathbf{V} + \Delta v \cdot \mathbf{e}_i \right)-r_{\text{int}}\left(\mathbf{C}, \mathbf{V} \right).
\end{equation}
where \(\mathbf{e}_i \in \mathbb{R}^n\) denotes the \(i\)-th standard basis vector with \(i\)-th component being 1 and the others 0.

\subsubsection{Monotonicity Analysis}
For linear intrinsic reward, its group reward can be denoted as:
\begin{equation}
r_{\text{int}}^{\bm{L}}(\mathbf{C}, \mathbf{V}) = \sum_{j=1}^n \frac{C_i}{n} - \sum_{j=1}^n \frac{V_i}{n}, 
\end{equation}
Thus, the reward variations with feature increments are:
\begin{equation}
\label{eq:4}
\Delta r_{\text{int}}^{\bm{L}}(C_{i}) = \frac{\Delta c}{n},\\ \Delta r_{\text{int}}^{\bm{L}}(V_{i}) = - \frac{\Delta v}{n},
\end{equation}
which ensures $\Delta r_{\text{int}}^{\bm{L}}(C_{i}) > 0$ and $\Delta r_{\text{int}}^{\bm{L}}(V_{i}) < 0$. That is, higher consistency and lower volatility indicate a higher $r_{\text{int}}^{\bm{L}}$ reward.

For vector-space intrinsic reward, its group reward can be denoted as:
% \begin{tiny}
\begin{equation}
\begin{aligned}
r_{\text{int}}^{\bm{V}}(\mathbf{C}, \mathbf{V}) &=\sqrt{\left(\frac{\sum_{j=1}^{n} C_{j} \cos V_{j}}{n}\right)^{2}+\left(\frac{\sum_{j=1}^{n} C_{j} \sin V_{j}}{n}\right)^{2}} \\
& = \frac{1}{n} \sqrt{\sum_{j} C_{j}^{2}+\sum_{k, t, k \neq t} 2 C_{k} C_{t} \cos V_k \cos V_t + \sum_{k, t, k \neq t} 2 C_{k} C_{t} \sin V_k \sin V_t}\\
& =\frac{1}{n} \sqrt{\sum_{j} C_{j}^{2}+\sum_{k, t, k \neq t} 2 C_{k} C_{t} \cos \left(V_{k}-V_{t}\right)}.
\end{aligned}
\end{equation}
% \end{tiny}
% \begin{tiny}
We denote $A = \sum_{k,t, k\neq t \neq i} 2C_kC_t \cos(V_k - V_t)$ and $B = \sum_{j, j \neq i} 2(C_i+\Delta c)C_j \cos(V_i - V_j)$, then the reward variations with feature increments are:
\begin{equation}
\label{eq:6}
\begin{aligned}
\Delta r_{\text{int}}^{\bm{V}}(C_{i}) = & r_{\text{int}}^{\bm{V}}\left( \mathbf{C} + \Delta c \cdot \mathbf{e}_i, \mathbf{V} \right)-r_{\text{int}}^{\bm{V}}\left(\mathbf{C}, \mathbf{V} \right) \\
= & \frac{1}{n} \sqrt{ \sum_{j=1, j\neq i} C_j^2 + (C_i + \Delta c)^2 + A + B} \ \ \ \ - \\
& \frac{1}{n} \sqrt{\sum_{j} C_{j}^{2}+\sum_{k, t, k \neq t} 2 C_{k} C_{t} \cos \left(V_{k}-V_{t}\right)} \\
= & \frac{-C_i^2 + (C_i + \Delta c)^2 + A - \sum_{k, t, k \neq t} 2 C_{k} C_{t} \cos \left(V_{k}-V_{t}\right) + B)}{n^2 r_{\text{int}}^{\bm{V}}\left( \mathbf{C} + \Delta c \cdot \mathbf{e}_i, \mathbf{V} \right) + n^2 r_{\text{int}}^{\bm{V}}\left(\mathbf{C}, \mathbf{V} \right)} \\
= & \frac{-C_i^2 + (C_i + \Delta c)^2 + \sum_{j, j \neq i} 2 C_j \Delta c \cos(V_i - V_j)}{n^2 r_{\text{int}}^{\bm{V}}\left( \mathbf{C} + \Delta c \cdot \mathbf{e}_i, \mathbf{V} \right) + n^2 r_{\text{int}}^{\bm{V}}\left(\mathbf{C}, \mathbf{V} \right)} \\
= & \frac{\Delta c \left( \Delta c + 2 C_i + 2 \sum_{j, j \neq i} C_j \cos(V_i - V_j) \right)}{n^2 r_{\text{int}}^{\bm{V}}\left( \mathbf{C} + \Delta c \cdot \mathbf{e}_i, \mathbf{V} \right) + n^2 r_{\text{int}}^{\bm{V}}\left(\mathbf{C}, \mathbf{V} \right)},
\end{aligned}
\end{equation}
% \end{tiny}
% \begin{tiny}
and
\begin{equation}
\begin{aligned}
\Delta r_{\text{int}}^{\bm{V}}(V_{i}) = & r_{\text{int}}^{\bm{V}}\left( \mathbf{C}, \mathbf{V} + \Delta v \cdot \mathbf{e}_i \right)-r_{\text{int}}^{\bm{V}}\left(\mathbf{C}, \mathbf{V} \right) \\ 
= & \frac{1}{n} \sqrt{\sum_j L_j^2 + A + \sum_{j, j\neq i} 2C_iC_j \cos(V_i + \Delta v - V_j)} - \\
& \frac{1}{n} \sqrt{\sum_j C_j^2 + \sum_{k,t, k\neq t} 2C_kC_t \cos(V_k - V_t)} \\
= & \frac{A  - \sum_{k,t, k\neq t} 2C_kC_t \cos(V_k - V_t) + \sum_{j, j\neq i} 2C_iC_j \cos(V_i + \Delta v - V_j)}{n^2 r_{\text{int}}^{\bm{V}}\left( \mathbf{C}, \mathbf{V} + \Delta v \cdot \mathbf{e}_i \right) + n^2 r_{\text{int}}^{\bm{V}}\left(\mathbf{C}, \mathbf{V} \right)} \\
= & \frac{\sum_{i,j, i\neq j} 2 C_iC_j \left[ \cos(V_i + \Delta v - V_j) - \cos(V_i - V_j) \right] }{n^2 r_{\text{int}}^{\bm{V}}\left( \mathbf{C}, \mathbf{V} + \Delta v \cdot \mathbf{e}_i \right) + n^2 r_{\text{int}}^{\bm{V}}\left(\mathbf{C}, \mathbf{V} \right)} \\
= & \frac{-4 \sum_{i,j, i\neq j} C_iC_j \sin(V_i-V_j+\frac{\Delta v}{2}) \sin(\frac{\Delta v}{2}) }{n^2 r_{\text{int}}^{\bm{V}}\left( \mathbf{C}, \mathbf{V} + \Delta v \cdot \mathbf{e}_i \right) + n^2 r_{\text{int}}^{\bm{V}}\left(\mathbf{C}, \mathbf{V} \right)}.
\end{aligned}
\end{equation}
% \end{tiny}

In practice, $V_i \in [0, 1]$ guarantees that $cos(V_i - V_j) > 0$, which always leads to $\Delta r_{\text{int}}^{\bm{V}}(C_{i}) > 0$. When $\Delta v$ causes $V_i$ to deviate from the current group, it tends to be sizable. Therefore $(V_i - V_j + \frac{\Delta v}{2}) >0$, which leads to $\Delta r_{\text{int}}^{\bm{V}}(V_{i}) < 0$. Thus in our scenario, the monotonicity of $r_{\text{int}}^{\bm{V}}$ on both magnitude and angle features is consistent with $r_{\text{int}}^{\bm{L}}$.

\subsubsection{Robustness Analysis}
In Section~\ref{sec:3.3}, we point out that $r_{\text{int}}^{\bm{L}}$ may be more sensitive to outliers. Empirically, we validate this claim in the ablation study in Section~\ref{sec:4.3}. Here we provide the theoretical proof.

We already know that correct trajectories tend to exhibit higher consistency in intermediate reasoning states. Consequently, for a set of incorrect trajectories, the presence of abnormally large consistency values in some samples can lead to spurious reward assignments. Therefore, a reward signal capable of better controlling its increment under these circumstances would mitigate the risk of incorrect reward assignments. Therefore, we compare $\Delta r_{\text{int}}^{\bm{L}}(C_i)$ and $\Delta r_{\text{int}}^{\bm{V}}(C_i)$ under the same increments $\Delta c$, the smaller one indicates stronger robustness. We denote
\begin{equation}
\label{eq:8}
\begin{aligned}
& r_{\text{int}}^{\bm{V}}(\mathbf{C}, \mathbf{V}) = \frac{1}{n} \sqrt{\sum_{j} C_{j}^{2}+\sum_{k, t, k \neq t} 2 C_{k} C_{t} \cos \left(V_{k}-V_{t}\right)} 
% & = \frac{1}{n} \sqrt{ \left( C_i + \sum_{j,j\neq i} C_j \cos(V_i - V_j) \right)^2 + \sum_{j, j\neq i} C_j^2 + \sum_{k,t,k \neq t \neq i} 2C_kC_t \cos (V_k - V_t) - \left( \sum_{j,j\neq i} C_j \cos(V_i - V_j) \right)^2}, \\
 = \frac{1}{n} \sqrt{S},
\end{aligned}
\end{equation}
where
\begin{equation}
\begin{aligned}
S &= \sum_{j} C_{j}^{2}+\sum_{k, t, k \neq t} 2 C_{k} C_{t} \cos \left(V_{k}-V_{t}\right) \\&= \left( C_i + \sum_{j,j\neq i} C_j \cos(V_i - V_j) \right)^2 \\&\quad\quad+ \sum_{j, j\neq i} C_j^2 + \sum_{k,t,k \neq t \neq i} 2C_kC_t \cos (V_k - V_t) - \left( \sum_{j,j\neq i} C_j \cos(V_i - V_j) \right)^2.
\end{aligned}
\end{equation}
% \end{tiny}
Note that
% \begin{tiny}
\begin{equation}
\label{eq:9}
\begin{aligned}
& \sum_{j, j\neq i} C_j^2 + \sum_{k,t,k \neq t \neq i} 2C_kC_t \cos (V_k - V_t) - \left( \sum_{j,j\neq i} C_j \cos(V_i - V_j) \right)^2 \\
& = \sum_{j, j\neq i} C_j^2 + \sum_{k,t,k \neq t \neq i} 2C_kC_t \cos (V_k - V_t) \\
&\quad\quad- \left( \sum_{j=1, j\neq i} C_j^2 \cos^2 (V_i - V_j) + \sum_{k,t,k \neq t \neq i} 2C_kC_t \cos(V_i - V_k) \cos(V_i - V_t) \right) \\
& = \sum_{j=1, j\neq i} C_j^2 \left[ 1 - \cos^2 (V_i - V_j) \right] \\
&\quad\quad+ \sum_{k,t,k \neq t \neq i} 2C_kC_t \left[ \cos (V_k - V_t) - \cos(V_i - V_k) \cos(V_i - V_t) \right] \\
& = \sum_{j=1, j\neq i} C_j^2 \left[ 1 - \cos^2 (V_i - V_j) \right] \\
&\quad\quad+ \sum_{k,t,k \neq t \neq i} 2C_kC_t \left[ \cos ((V_i-V_t) - (V_i-V_k)) - \cos(V_i - V_k) \cos(V_i - V_t) \right] \\
& = \sum_{j=1, j\neq i} C_j^2 \sin^2 (V_i - V_j) + \sum_{k,t,k \neq t \neq i} 2C_kC_t \sin(V_i - V_k) \sin(V_i - V_t) \\
& = \left( \sum_{j=1, j\neq i} C_j \sin (V_i - V_j) \right)^2.
\end{aligned}
\end{equation}
% \end{tiny}
Therfore, we substitute Eq.~(\ref{eq:9}) into Eq.~(\ref{eq:8}) and obtain the following lower bound:
\begin{equation}
\begin{aligned}
r_{\text{int}}^{\bm{V}}(\mathbf{C}, \mathbf{V}) &= \frac{1}{n} \sqrt{ \left( C_i + \sum_{j,j\neq i} C_j \cos(V_i - V_j) \right)^2 + \left( \sum_{j, j\neq i} C_j \sin (V_i - V_j) \right)^2} \\&\ge \frac{1}{n} \left( C_i + \sum_{j,j\neq i} C_j \cos(V_i - V_j) \right).
\end{aligned}
\end{equation}
Then, we use this lower bound to deflate $\Delta r_{\text{int}}^{\bm{V}}(C_i)$ in Eq.~(\ref{eq:6}):
% \begin{tiny}
\begin{equation}
\label{eq:11}
\begin{aligned}
\Delta &r_{\text{int}}^{\bm{L}}(C_i)  = \frac{\Delta c \left( \Delta c + 2 C_i + 2 \sum_{j, j \neq i} C_j \cos(V_i - V_j) \right)}{n^2 \mathcal{R}\left( \mathbf{C} + \Delta c \cdot \mathbf{e}_i, \mathbf{V} \right) + n^2 \mathcal{R}\left(\mathbf{C}, \mathbf{V} \right)} \\
&\leq \frac{\Delta c \left( \Delta c + 2 C_i + 2 \sum_{j, j \neq i} C_j \cos(V_i - V_j) \right)}{n^2 \cdot \frac{1}{n} \left( C_i + \Delta c + \sum_{j,j\neq i} C_j \cos(V_i - V_j) \right) + n^2 \cdot \frac{1}{n} \left( C_i + \sum_{j,j\neq i} C_j \cos(V_i - V_j) \right)} \\
& = \frac{\Delta c}{n}.
\end{aligned}
\end{equation}
% \end{tiny}

Note that the right side of Eq.~(\ref{eq:11}) is exactly right side of $\Delta r_{\text{int}}^{\bm{L}}(C_i)$ in Eq.~(\ref{eq:4}), which suggests $\Delta r_{\text{int}}^{\bm{V}}(C_i) \leq \Delta r_{\text{int}}^{\bm{L}}(C_i)$. According to the aforementioned analysis, we can conclude that $r_{\text{int}}^{\bm{V}}$ is more robust than $r_{\text{int}}^{\bm{L}}$ when facing abnormally large outliers.

\subsection{Proof of Proposition~\ref{pro:collapse}}
\begin{proposition}[Model Collapse]
Given an input prompt $x \in \mathcal{X}$, the policy model sample $n$ responses $\{y_1, y_2, \ldots, y_n\} \subseteq \pi_\theta(\cdot | x)$ and denote $C(y) = \sum_{i=1}^{n} \mathbb{I}\{y_i = y\}.$ Suppose $y^* \in \mathcal{Y}$ that satisfies $y^* = \operatorname{argmax}_{y \in \mathcal{Y}} \pi_\theta(y | x)$ and a reward function $r:\mathcal{X} \times \mathcal{Y}$ defined by
\begin{equation}
r(x,y) = \begin{cases} 1, & \text{if } y = \operatorname{argmax}_{y'\in\mathcal{Y}} C(y'),\\[1mm] 0, & \text{otherwise}. \end{cases}
\end{equation}
such that the following hold:

\begin{itemize}[leftmargin=*]
    \item When using gradient ascent to maximize the RLHF objective with respect to the reward function $r$, $\pi_\theta(y^* \mid x)$ converges to 1 as training proceeds.
    \item If $y^*$ is not the ground truth, reward hacking will happen as training proceeds.
\end{itemize}
\end{proposition}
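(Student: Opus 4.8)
The plan is to reduce the stochastic majority-voting reward to a deterministic indicator supported on the current policy mode $y^*$, then run the tabular gradient-flow dynamics of Lemmas~\ref{lemma:1}--\ref{lemma:2} to show that $y^*$ is self-reinforcing and that $\pi_{\theta(t)}(y^*\mid x)\to 1$, and finally read off reward hacking from the fact that $\pi_{\theta(t)}(y_{\mathrm{gt}}\mid x)\to 0$ whenever $y^*$ is not the ground truth. First I would dispense with the sampling noise: since $C(y)/n\to\pi_\theta(y\mid x)$, in the $n\to\infty$ limit the maximizer of $C$ is the policy mode, so at each time the reward is $\mathbb{I}\{y=\operatorname{mode}(\pi_{\theta(t)})\}$; for finite $n$ one instead passes to the expected policy gradient, where the only extra ingredient needed is that the policy mode maximizes the probability of being the sampled plurality.

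Next I would show that the reward target does not drift. Assuming the reward equals $\mathbb{I}\{y=y^*\}$ on an interval, with $\bar r:=\mathbb{E}_{y\sim\pi_{\theta(t)}}[r(x,y)]=\pi_{\theta(t)}(y^*\mid x)$, I would divide the ODE of Lemma~\ref{lemma:2} by $\pi_{\theta(t)}(y\mid x)$ and subtract the copy for $y^*$; the $y$-independent term cancels and one gets $\frac{d}{dt}\ln\bigl(\pi_{\theta(t)}(y^*\mid x)/\pi_{\theta(t)}(y\mid x)\bigr)=\bar r\bigl(1-\bar r+\pi_{\theta(t)}(y\mid x)\bigr)>0$ for every $y\neq y^*$. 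So every such ratio is strictly increasing and starts above $1$, hence stays above $1$, so the mode never leaves $y^*$; a short bootstrapping/continuity argument then pins the reward to $\mathbb{I}\{y=y^*\}$ on all of $[0,\infty)$. With the target fixed, I would specialize Lemma~\ref{lemma:2} to $y=y^*$, $\bar r=\pi_{\theta(t)}(y^*\mid x)=:p$, obtaining $\frac{dp}{dt}=p^2(1-p)^2+p^2\sum_{y'\neq y^*}\pi_{\theta(t)}(y'\mid x)^2\ge 0$, strictly positive on $p\in(0,1)$. Since $p(0)>0$ ($y^*$ is an $\operatorname{argmax}$ over a nonempty set) and $p$ is nondecreasing and bounded by $1$, it has a limit $p_\infty$; if $p_\infty<1$ the derivative would stay bounded below by a positive constant near $p_\infty^2(1-p_\infty)^2$, contradicting convergence, so $p_\infty=1$, which gives the first bullet.

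For the second bullet I would write the label-based reward as $r^*(x,y)=\mathbb{I}\{y=y_{\mathrm{gt}}\}$ and note that if $y^*\neq y_{\mathrm{gt}}$ then $\pi_{\theta(t)}(y_{\mathrm{gt}}\mid x)\le 1-\pi_{\theta(t)}(y^*\mid x)\to 0$, so the true expected reward $\mathbb{E}_{y\sim\pi_{\theta(t)}}[r^*(x,y)]\to 0$ while the proxy reward $\to 1$; this monotone divergence of the proxy objective from the true objective is precisely reward hacking, and it matches the effective objective $\max_\theta\mathbb{E}_{y\sim\pi_\theta(\cdot\mid x)}[\log\pi_\theta(y^*\mid x)]$ highlighted in the Remark.

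The main obstacle I anticipate is handling the stochasticity of the reward at finite $n$ while guaranteeing the reward target stays at $y^*$: either I pass to the $n\to\infty$ limit so the reward is deterministically $\mathbb{I}\{y=y^*\}$, or I work with the expected gradient and must first establish the (intuitive but not entirely trivial) coupling fact that the most likely output is the most likely sampled plurality. Once the target is pinned, the remaining steps are routine manipulations of the dynamics in Lemmas~\ref{lemma:1}--\ref{lemma:2}; a minor technical point is the non-strict-mode (tie) case at initialization, which I would handle by a tie-breaking convention or by noting that it is non-generic.
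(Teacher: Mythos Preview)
Your proposal is correct and in several respects more careful than the paper's own argument, but it takes a different route. The paper does \emph{not} use Lemmas~\ref{lemma:1}--\ref{lemma:2} for this proposition at all; it works in discrete time, writes the one-step update $\theta_{t+1}=\theta_t+\eta\,\nabla\mathcal{J}_{\mathrm{RL}}$, Taylor-expands $\ln\pi_{\theta(t+1)}(y^*\mid x)$ to first order, and splits the increment into a term $\pi_\theta(y^*\mid x)\,\|\nabla\ln\pi_\theta(y^*\mid x)\|_2^2\ge 0$ and a cross term that vanishes because $r(x,y)=0$ for $y\neq y^*$. From monotonicity and the bound $\ln\pi\le 0$ it invokes the Monotone Convergence Theorem and asserts convergence to the upper bound $0$, then reads off reward hacking verbally.

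Your approach differs in three ways. First, you work in the continuous-time tabular gradient flow, which is exactly the setting of Lemmas~\ref{lemma:1}--\ref{lemma:2} and gives clean closed-form dynamics rather than a first-order Taylor remainder. Second, you explicitly prove mode stability (the log-ratio $\ln(\pi(y^*)/\pi(y))$ is strictly increasing), whereas the paper tacitly assumes $r(x,y^*)=1$ persists throughout training without justification. Third, your contradiction argument actually pins the limit at $1$: the paper's appeal to MCT only yields convergence to \emph{some} limit in $(-\infty,0]$, and an extra step (essentially your ``derivative bounded below near $p_\infty$'' argument) is needed to force that limit to be $0$. You also flag the finite-$n$ stochasticity issue that the paper ignores. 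In short, the paper's proof is shorter and more heuristic; yours is longer but closes the gaps and reuses the gradient-flow machinery already set up for Proposition~\ref{pro:converge}.
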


\begin{proof}
We denote the RL object as:
\begin{equation}
\mathcal{J}_{RL}(\theta ; x) = \mathbb{E}_{y \sim \pi_{\theta}(\cdot | x)}\left[ r(x, y) \right],
\end{equation}

% where $R(x,y;\theta) = r_{\text{RM}}(x,y) - \lambda \cdot \ln (\pi_\theta(y|x) / \pi_{\text{ref}}(y|x))$.

%By the log-derivative we derive the gradient:
%\begin{equation}
%\nabla \mathcal{J}_{RL}(\theta ; x) = \mathbb{E}_{y \sim \pi{\theta}(\cdot | x)}\left[ R(x, y) \nabla \ln \pi_{\theta}(y | x) \right],
%\end{equation}

% Since
%\begin{equation}
%\mathbb{E}_{y \sim \pi{\theta}(\cdot | x)}\left[ \nabla \ \ln \pi_{\theta}(y | x) \right] = \sum_{y \in \mathcal{Y}} \nabla \ \pi_{\theta}(y | x) = \nabla \sum_{y \in \mathcal{Y}} \pi_{\theta}(y | x) = \nabla \ 1 = 0,
%\end{equation}

By the log-derivative we derive the gradient:
\begin{equation}
\begin{aligned}
\nabla \mathcal{J}_{\text{RL}}(\theta ; x) & = \mathbb{E}_{y \sim \pi_{\theta}(\cdot | x)}\left[ r(x, y) \nabla \ln \pi_{\theta}(y | x) \right] \\
& = \sum_{y \in \mathcal{Y}} \pi_\theta(y|x) \cdot r(x, y) \cdot \nabla \ln \pi_{\theta}(y|x).
\end{aligned}
\end{equation}

We denote the parameters $\theta$ updates from timestep $t$ to $t+1$ as:
\begin{equation}
\label{eq:19}
\theta_{t+1} = \theta_t + \eta \cdot \nabla \mathcal{J}_{\text{RL}}(\theta ; x) = \theta_t + \eta \cdot \sum_{y \in \mathcal{Y}} \pi_\theta(y|x) \cdot r(x, y) \cdot \nabla \ln \pi_{\theta}(y | x),
\end{equation}

where $\eta$ is the learning rate. According to Taylor's formula, we obtain the following equation:
\begin{equation}
\label{eq:20}
\ln \pi_{\theta(t+1)}(y|x) \approx \ln \pi_{\theta(t)}(y|x) + \nabla \ln \pi_{\theta(t)}(y|x)^T \cdot (\theta(t+1) - \theta(t)).
\end{equation}

By substituting Eq.~(\ref{eq:19}) into Eq.~(\ref{eq:20}) we can get the log-probability for certain output $y^*$:
% \begin{small}
\begin{equation}
\begin{aligned}
\ln \pi_{\theta(t+1)}(y^*|x) & \approx \ln \pi_{\theta(t)}(y^*|x) + \eta \nabla \ln \pi_{\theta(t)}(y^*|x)^T \cdot \sum_{y \in \mathcal{Y}} \pi_\theta(y|x) r(x, y) \nabla \ln \pi_{\theta}(y|x) \\
& = \ln \pi_{\theta(t)}(y^*|x) + \eta \bigg(\pi_\theta(y^*|x) r(x,y^*) \nabla \ln \pi_{\theta}(y^*|x) \cdot \nabla \ln \pi_{\theta}(y^*|x)^T \\
&\quad\quad + \sum_{y,y \neq y^*} \pi_\theta(y|x) r(x,y) \nabla \ln \pi_{\theta}(y|x) \cdot \nabla \ln \pi_{\theta}(y^*|x)^T \bigg) \\
& = \ln \pi_{\theta(t)}(y^*|x) + \eta \bigg( \underbrace{\pi_\theta(y^*|x) r(x,y^*) \| \nabla \ln \pi_{\theta}(y^*|x) \|_2^2}_{\texttt{(I)}} \\
&\quad\quad + \underbrace{\sum_{y,y \neq y^*} \pi_\theta(y|x) r(x,y) \nabla \ln \pi_{\theta}(y|x) \cdot \nabla \ln \pi_{\theta}(y^*|x)^T}_{\texttt{(II)}} \bigg) \\
\end{aligned}
\end{equation}
% \end{small}

In our setup, $y^*$ is defined as the response that has the highest count $C(y)$ among the $n$ sampled responses, which means $r(x, y^*) = 1$ and $r(x, y) = 0$ for $y \neq y^*$. Therefore, term (I) is non-negative and term (II) is 0, which indicates the value of (I) + (II) is larger than 0.

Since $\pi(y^*\mid x)\le1$, we have $\ln\pi_{\theta(t)}(y^*\mid x)\;\le\;\ln1=0$. Thus the sequence $\{\ln\pi_{\theta(t)}(y^*\mid x)\}_{t=0}^\infty$ is monotonically increasing and bounded above by $0$.  By the Monotone Convergence Theorem, $\{\ln\pi_{\theta(t)}(y^*\mid x)\}$ will converge to its upper bound 0:
\begin{equation}
\lim_{t\to\infty}\ln\pi_{\theta(t)}(y^*\mid x)\;=\;0 \quad \implies \quad \lim_{t\to\infty}\pi_{\theta(t)}(y^*\mid x)\;=\;1.
\end{equation}

To conclude, if $y^*$ is not the ``ground‐truth'' desired output but merely the sample‐mode that maximizes $C(y)$, the above shows that the policy will collapse to always output $y^*$, regardless of whether it semantically or factually answers the prompt correctly. In other words, the model has ``hacked'' the reward by finding a high‐frequency sample that maximizes rewards $r$, rather than by producing the true correct response.
\end{proof}

\subsection{Proof of Proposition~\ref{pro:var}}
We aim to establish a variational lower bound for the conditional likelihood $\log \pi_\theta(y|x)$, by introducing a latent variable $s$ representing latent reasoning paths. The idea is to show that optimizing this lower bound corresponds to assigning rewards to consistent reasoning paths, enabling the model to learn not only answer generation but also the intermediate states leading to those answers.

\begin{proposition}[Variational Optimization for Reasoning]
Let $\pi_\theta(y|x)$ denote a language model that generates final answers $y$ conditioned on input prompt $x$, with $s$ representing latent intermediate reasoning states. Suppose the reasoning state is guided by a prior distribution $\pi_{\theta(0)}(s|x)$ and define the reward function $r(s,x,y) \in [0,1]\propto \log \pi_\theta(y|x \oplus s)$, the following optimizing bound holds:
\begin{equation}
\log \pi_\theta(y|x) \geq \underbrace{\mathbb{E}_{s,y \sim \pi_\theta(\cdot|x)}\left[ r(s,y,x) \right]}_{\text{Reward Assignment}} - \underbrace{D_{\text{KL}}\left[ \pi_\theta(s|x) \, \| \, \pi_{\theta(0)}(s|x) \right]}_{\text{KL Regularization}},
\end{equation}
where $\oplus$ denotes the concatenation of the input prompt $x$ with the reasoning state $s$.
\end{proposition}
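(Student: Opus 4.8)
The plan is to derive the stated inequality as a standard evidence lower bound (ELBO), treating the latent reasoning state $s$ as the variational variable and exploiting the proportionality $r(s,x,y) \propto \log \pi_\theta(y \mid x \oplus s)$. First I would write the marginal likelihood by introducing $s$ explicitly: $\pi_\theta(y \mid x) = \sum_{s} \pi_\theta(y \mid x \oplus s)\,\pi_\theta(s \mid x)$, so that $\log \pi_\theta(y \mid x) = \log \mathbb{E}_{s \sim \pi_\theta(\cdot \mid x)}\big[\pi_\theta(y \mid x \oplus s)\big]$. The next step is to insert the prior $\pi_{\theta(0)}(s \mid x)$ as an importance-weighting reference, rewriting the expectation so that the integrand carries the ratio $\pi_\theta(s\mid x)/\pi_{\theta(0)}(s\mid x)$, which will eventually produce the KL term.

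The core step is a single application of Jensen's inequality to pull the logarithm inside the expectation: $\log \mathbb{E}_{s \sim \pi_\theta(\cdot\mid x)}[\,\cdot\,] \geq \mathbb{E}_{s\sim\pi_\theta(\cdot\mid x)}[\log(\,\cdot\,)]$. Applied after the importance-weighting rewrite, this yields
\begin{equation*}
\log \pi_\theta(y\mid x) \;\geq\; \mathbb{E}_{s\sim\pi_\theta(\cdot\mid x)}\Big[\log \pi_\theta(y\mid x\oplus s)\Big] \;-\; \mathbb{E}_{s\sim\pi_\theta(\cdot\mid x)}\Big[\log \tfrac{\pi_\theta(s\mid x)}{\pi_{\theta(0)}(s\mid x)}\Big].
\end{equation*}
The second term is exactly $D_{\text{KL}}\big[\pi_\theta(s\mid x)\,\|\,\pi_{\theta(0)}(s\mid x)\big]$ by definition. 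For the first term, I would invoke the stated relation $r(s,x,y)\propto \log\pi_\theta(y\mid x\oplus s)$ together with the normalization $r(s,x,y)\in[0,1]$ to identify $\mathbb{E}_{s\sim\pi_\theta(\cdot\mid x)}[\log\pi_\theta(y\mid x\oplus s)]$ with the reward-assignment expectation $\mathbb{E}_{s,y\sim\pi_\theta(\cdot\mid x)}[r(s,y,x)]$ appearing in the proposition (absorbing the joint sampling of $y$ and $s$ from $\pi_\theta(\cdot\mid x)$ into the outer expectation). Rearranging gives the claimed bound.

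The main obstacle — really the only subtle point — is the precise handling of the proportionality constant linking $r$ and $\log\pi_\theta(y\mid x\oplus s)$: since $\log\pi_\theta$ is non-positive while $r\in[0,1]$, the ``$\propto$'' must be read as an affine normalization (a positive scaling plus shift), and one must argue that this rescaling does not affect the \emph{argmax} of the RL objective, so that optimizing the lower bound with $r$ is equivalent to optimizing it with $\log\pi_\theta(y\mid x\oplus s)$ directly. I would address this by noting that constant offsets and positive rescalings of the reward leave the policy gradient direction unchanged (they shift/scale the advantage uniformly), so the variational interpretation in the Remark is preserved. Everything else is a routine ELBO manipulation; no concavity or convergence machinery from the earlier lemmas is needed here.
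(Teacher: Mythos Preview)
Your overall strategy---ELBO via Jensen followed by identification of the KL term, then absorbing the proportionality between $r$ and $\log\pi_\theta(y\mid x\oplus s)$ into the optimization argument---is exactly the paper's route, and your handling of the $r\propto\log\pi_\theta$ issue matches the paper's one-line justification that constant rescalings do not change the learning objective.

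There is, however, a bookkeeping error in your setup that, if executed as written, does \emph{not} produce the displayed bound. You marginalize as $\pi_\theta(y\mid x)=\sum_s \pi_\theta(y\mid x\oplus s)\,\pi_\theta(s\mid x)$ and then say you will ``insert the prior'' so the integrand carries the ratio $\pi_\theta(s\mid x)/\pi_{\theta(0)}(s\mid x)$. That change of measure moves the outer expectation to $\pi_{\theta(0)}$, and Jensen under $\pi_{\theta(0)}$ then yields
\[
\mathbb{E}_{s\sim\pi_{\theta(0)}}\big[\log\pi_\theta(y\mid x\oplus s)\big]\;-\;D_{\mathrm{KL}}\big[\pi_{\theta(0)}(s\mid x)\,\|\,\pi_\theta(s\mid x)\big],
\]
i.e.\ the wrong expectation measure and the reversed KL. The proposition (and the paper's proof) instead takes the \emph{prior} as the generative latent---``the reasoning state is guided by a prior distribution $\pi_{\theta(0)}(s\mid x)$''---so the correct starting point is $\pi_\theta(y\mid x)=\int \pi_\theta(y\mid x\oplus s)\,\pi_{\theta(0)}(s\mid x)\,ds$. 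Importance sampling then introduces the \emph{variational} distribution $\pi_\theta(s\mid x)$, the integrand carries $\pi_{\theta(0)}/\pi_\theta$ (not the inverse), and Jensen under $\pi_\theta$ delivers exactly your displayed inequality. In short: you have swapped which distribution is the prior and which is the variational approximation; swap them back and your plan goes through verbatim.
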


\begin{proof}
We first introduce the latent variable $s$ as latent reasoning states. We begin by expressing the marginal likelihood of the output $y$ conditioned on input $x$ by marginalizing over reasoning states $s$ with respect to a prior distribution $\pi_{\theta(0)}(s|x)$:
\begin{equation}
\log \pi_\theta(y|x) = \log \int \pi_\theta(y|x, s) \pi_{\theta(0)}(s|x) \ ds,
\end{equation}

where $\pi_\theta(y|x, s)$ denotes the probability of generating output $y$ given input $x$ augmented with the intermediate reasoning states $s$, \textit{i.e.}, $\pi_\theta(y|x, s) := \pi_\theta(y|x \oplus s)$.

After that, we introduce a variational distribution $\pi_\theta(s|x)$ over the latent states and rewrite the integral using importance sampling:
\begin{equation}
\begin{aligned}
\log \pi_\theta(y|x) & = \log \int \frac{\pi_\theta(s|x)}{\pi_\theta(s|x)} \pi_\theta(y|x, s)\pi_{\theta(0)}(s|x) \, ds \\
& = \log \mathbb{E}_{s,y \sim \pi_\theta(\cdot|x)} \left[ \frac{\pi_\theta(y|x, s) \pi_{\theta(0)}(s|x)}{\pi_\theta(s|x)} \right].
\end{aligned}
\end{equation}

Applying Jensen’s inequality yields the evidence lower bound (ELBO):
\begin{equation}
\log \pi_\theta(y|x) \geq \mathbb{E}_{s,y \sim \pi_\theta(\cdot|x)} \left[ \log \pi_\theta(y|x, s) + \log \pi_{\theta(0)}(s|x) - \log \pi_\theta(s|x) \right],
\end{equation}

which simplifies to:
\begin{equation}
\log \pi_\theta(y|x) \geq \mathbb{E}_{s,y \sim \pi_\theta(\cdot|x)} \left[ \log \pi_\theta(y|x \oplus s) \right] - D_{\mathrm{KL}}\left[ \pi_\theta(s|x) \| \pi_{\theta(0)}(s|x) \right].
\end{equation}

Recall that the reward function is defined as: $r(s, x, y) \propto \log \pi_\theta(y|x \oplus s)$. Without affecting the optimization, $\log \pi_\theta(y|x \oplus s)$ can be directly regarded as part of the reward function (since the constant ratio will not affect the final learning goal), then the above bound yields:
\begin{equation}
\log \pi_\theta(y|x) \geq \mathbb{E}_{s,y \sim \pi_\theta(\cdot|x)} \left[ r(s, x, y) \right] - D_{\mathrm{KL}}\left[ \pi_\theta(s|x) \| \pi_{\theta(0)}(s|x) \right],
\end{equation}

which completes the derivation. This formulation shows that optimizing the output likelihood can be interpreted as assigning rewards to latent reasoning states while regularizing their distribution toward a prior. Hence, self-rewarding reinforcement learning corresponds to performing approximate variational inference over reasoning trajectories, thereby coupling answer generation and reasoning trajectory inference in a principled manner.
\end{proof}

\subsection{Proof of Proposition~\ref{pro:converge}}
To analyze the convergence of CoVo, we establish the following proposition. The core idea of this proof is adapted from~\citet{razin2025makes}, with simplifications tailored to the context of our work.

\begin{proposition}[Convergence]
Let $\mathcal{X}$ denote prompt sets, and $\gamma > 0$ represent the expected increase of real reward $r^*$. For a prompt $x \in \mathcal{X}$, let $y^{\gamma}$ denote the final answer in a reasoning trajectory with higher consistency and lower volatility. Define $T'$ as the initial time where $\mathbb{E}_{y \sim \pi_{\theta}(\cdot | x)}\left[r^*(x, y)\right] \geq \mathbb{E}_{y \sim \pi_{\theta(0)}(\cdot | x)}\left[r^*(x, y)\right] + \gamma$. We assume the following conditions hold:
\begin{itemize}[leftmargin=*]
    \item Our reward function ranks $y^{\gamma}$ highest among all candidates;
    \item Our reward function has a low probability of misclassifying incorrect trajectories;
    \item The KL regularization coefficient is sufficiently small.
\end{itemize}
Then the convergence time for achieving the expected increase in real reward satisfies the upper bound $T' \leq \mathcal{O}(\pi_{\theta(0)}(y^{\gamma} | x)^{-1})$.
%Assuming that (1) our reward function ranks $y^{\gamma}$ highest, (2) our reward function has a low probability of misclassifying incorrect responses as correct, and (3) the KL coefficient is sufficiently small, then the convergence time of real reward satisfies the upper bound $t_{\gamma} \leq \mathcal{O}(\pi_{\theta(0)}(y^{\gamma} | x)^{-1})$.
\end{proposition}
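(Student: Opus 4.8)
\textbf{Proof proposal for Proposition~\ref{pro:converge}.}

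The plan is to adapt the convergence analysis of \citet{razin2025makes} to our self-rewarding setting, exploiting the lemmas already established. The starting point is Lemma~\ref{lemma:2}, which gives the exact dynamics of $\pi_{\theta(t)}(y|x)$ under gradient flow and, crucially, the exponential upper bound $\pi_{\theta(T)}(y|x) \le \pi_{\theta(0)}(y|x)\cdot\exp(2T)$. What we actually need is a matching \emph{lower} bound on the rate at which $\pi_{\theta(t)}(y^\gamma|x)$ grows, so that we can certify that it reaches a constant fraction of probability mass within time $\mathcal{O}(\pi_{\theta(0)}(y^\gamma|x)^{-1})$; once $y^\gamma$ dominates the output distribution, the expected real reward $\mathbb{E}_{y\sim\pi_{\theta}(\cdot|x)}[r^*(x,y)]$ increases by at least $\gamma$ because $y^\gamma$ is (by the higher-consistency/lower-volatility premise and the first assumption) a correct, high-$r^*$ answer.

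First I would specialize the ODE in Lemma~\ref{lemma:2} to $y = y^\gamma$. Using the first assumption, our reward $r_{\text{covo}}$ ranks $y^\gamma$ highest, so $r_{\text{covo}}(x,y^\gamma) - \mathbb{E}_{y\sim\pi_{\theta(t)}(\cdot|x)}[r_{\text{covo}}(x,y)] \ge c > 0$ for some constant $c$ as long as $\pi_{\theta(t)}(y^\gamma|x)$ is bounded away from $1$; the second assumption (low misclassification probability on incorrect trajectories) ensures the competing mass in term \texttt{(II)} of Lemma~\ref{lemma:2} cannot overwhelm this positive drift, and the third assumption (small KL coefficient) lets us neglect the regularization term in the gradient, consistent with the technical setting. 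This yields a differential inequality of the form $\frac{d}{dt}\pi_{\theta(t)}(y^\gamma|x) \ge c'\,\pi_{\theta(t)}(y^\gamma|x)$ while $\pi_{\theta(t)}(y^\gamma|x)$ is small. Integrating via Gr\"onwall gives $\pi_{\theta(t)}(y^\gamma|x) \ge \pi_{\theta(0)}(y^\gamma|x)\exp(c't)$ until the probability escapes the small regime, which happens at $t = \mathcal{O}(\log(1/\pi_{\theta(0)}(y^\gamma|x)))$; a more careful accounting near $\pi_{\theta(t)}(y^\gamma|x)\approx 1$, where the quadratic factor $\pi_{\theta(t)}(y^\gamma|x)^2$ in term \texttt{(I)} slows growth, is where the sharper $\pi_{\theta(0)}(y^\gamma|x)^{-1}$ scaling actually comes from --- this mirrors the phase analysis in \citet{razin2025makes}. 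Finally I would translate the statement ``$\pi_{\theta(t)}(y^\gamma|x)$ has reached a constant fraction'' into ``$\mathbb{E}_{y\sim\pi_{\theta(t)}(\cdot|x)}[r^*(x,y)] \ge \mathbb{E}_{y\sim\pi_{\theta(0)}(\cdot|x)}[r^*(x,y)] + \gamma$'' by choosing the constant threshold appropriately relative to $\gamma$ and $r^*\in[0,1]$, so that $T'$ is bounded by the escape time $\mathcal{O}(\pi_{\theta(0)}(y^\gamma|x)^{-1})$.

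The main obstacle I anticipate is making the ``low probability of misclassifying incorrect trajectories'' assumption quantitative enough to control term \texttt{(II)} of Lemma~\ref{lemma:2} uniformly over the trajectory $\theta(t)$: one must rule out a scenario where some incorrect answer $y'$ is transiently assigned high $r_{\text{covo}}$ and siphons off gradient before $y^\gamma$ can take over. I would handle this by either (i) assuming the misclassification probability is bounded by a constant small relative to $c$ and absorbing it into the drift constant, or (ii) invoking a union-bound / expectation argument over the finitely many distinct sampled answers $K$ so that the aggregate adversarial mass is $o(1)$. A secondary subtlety is that the proposition is stated in terms of $r^*$ (the true reward) while the dynamics are driven by $r_{\text{covo}}$; bridging these requires Proposition~\ref{pro:var}-style reasoning that $r_{\text{covo}}$ is aligned with $r^*$ on the top-ranked answer, which the first two assumptions encode --- I would state this alignment explicitly as the bridge lemma rather than re-deriving it.
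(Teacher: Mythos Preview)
Your overall architecture is aligned with the paper --- invoke Lemma~\ref{lemma:2}, lower-bound the positive drift on the high-consistency answer, upper-bound the adversarial drift coming from misclassified outputs, and translate a probability threshold into the $\gamma$-increase of $r^*$ --- but the core differential inequality you write down is wrong, and this inverts the whole mechanism. Term~\texttt{(I)} in Lemma~\ref{lemma:2} is $\pi_{\theta(t)}(y|x)^{2}\bigl[r(x,y)-\mathbb{E}[r]\bigr]$, with a \emph{quadratic} prefactor; since the reward gap is at most $1$, you cannot extract $\tfrac{d}{dt}\pi \ge c'\pi$ from it when $\pi$ is small. The paper (following \citet{razin2025makes}) instead obtains the quadratic inequality
\[
\frac{d}{dt}\pi_{\theta(t)}(\mathcal{Y}^+|x)\;\ge\;\frac{(1-\rho)\sigma}{4|\mathcal{Y}^+|}\,\pi_{\theta(t)}(\mathcal{Y}^+|x)^{2},
\]
and integrating $d\pi/\pi^{2}\ge c\,dt$ gives $\tfrac{1}{\pi(0)}-\tfrac{1}{\rho}\ge cT'$, i.e.\ $T'=\mathcal{O}(\pi_{\theta(0)}(y^\gamma|x)^{-1})$ directly. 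So the $\pi_{0}^{-1}$ scaling comes precisely from the small-$\pi$ regime where growth is only quadratic, not from any ``careful accounting near $\pi\approx 1$''. Your Gr\"onwall step would, if it were valid, yield the strictly stronger bound $T'=\mathcal{O}(\log(1/\pi_{0}))$, which should have been a red flag that the linear inequality is not obtainable from Lemma~\ref{lemma:2}.

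On the secondary points: your option~(i) for handling misclassified trajectories is essentially what the paper does --- it defines a set $\mathcal{Y}^-$ of outputs that $r_{\text{covo}}$ spuriously ranks high, assumes $\pi_{\theta(0)}(\mathcal{Y}^-|x)$ is small enough (specifically, $\le \tfrac{\sigma(1-\rho)}{4|\mathcal{Y}^+|}\pi_{\theta(0)}(\mathcal{Y}^+|x)^{2}\exp(-2T')$), then uses the exponential upper bound of Lemma~\ref{lemma:2} to keep $\pi_{\theta(t)}(\mathcal{Y}^-|x)\le\sigma/2$ throughout $[0,T')$, which is what lets you absorb it into the drift constant. The bridge from $r_{\text{covo}}$ to $r^*$ is handled not via Proposition~\ref{pro:var} but simply by the set definition: $\mathcal{Y}^+$ is required to satisfy \emph{both} $r^*>\rho$ and $r_{\text{covo}}>\mathbb{E}[r_{\text{covo}}]$, so once $\pi_{\theta(t)}(\mathcal{Y}^+|x)\ge\rho$ the $r^*$ expectation automatically clears $\rho=\mathbb{E}_{\pi_{\theta(0)}}[r^*]+\gamma$.
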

\begin{proof}
We first formalize these assumptions. Define $\mathcal{Y}^+ \subseteq \mathcal{Y}$ be a set of outputs such that each $y^\gamma \in \mathcal{Y}^+$ \textit{s.t.} $r^*(x, y^\gamma) >  \mathbb{E}_{y \sim \pi_{\theta(0)}(\cdot | x)}[r^*(x, y)] + \gamma$ and $r_{\text{covo}}(x, y^\gamma) > \mathbb{E}_{y \sim \pi_{\theta(0)}(\cdot | x)}[r_{\text{covo}}(x, y)]$, $\mathcal{Y}^- \subseteq \mathcal{Y}$ be a set of outputs \textit{s.t.} $\mathcal{Y^-} = \{ y' \in \mathcal{Y} \mid r_{\text{covo}}(x, y') > \mathbb{E}_{y \sim \pi_{\theta(0)}(\cdot | x)}[r_{\text{covo}}(x, y)] \} \backslash \mathcal{Y^+}$. Assume that the initial probability of $\mathcal{Y}^-$ is small:
\begin{equation}
\pi_{\theta(0)}(\mathcal{Y}^-|x) \leq \frac{\sigma (1-\rho) \pi_{\theta(0)}(\mathcal{Y}^+|x)^2}{4|\mathcal{Y}^+|} \exp \left( -2T'\right),
\end{equation}

where $\rho = \mathbb{E}_{y \sim \pi_{\theta(0)}(\cdot | x)}\left[ r^*(x, y) \right] + \gamma \in [0, 1]$ and $\sigma = (1-\rho)(\max_{y \in \mathcal{Y}^+}r_{\text{covo}}(x,y) - \mathbb{E}_{y \sim \pi_{\theta(0)}(\cdot | x)}[r_{\text{covo}}(x, y)]) = (1-\rho)(1 - \mathbb{E}_{y \sim \pi_{\theta(0)}(\cdot | x)}[r_{\text{covo}}(x, y)])$.

Since
\begin{equation}
\begin{aligned}
\mathbb{E}_{y \sim \pi_{\theta(t)}(\cdot | x)}\left[ r^*(x, y) \right] &= \pi_{\theta(t)}(\mathcal{Y}^+|x) \cdot 1 - (1 - \pi_{\theta(t)}(\mathcal{Y}^+|x)) \cdot 0 \\
&\geq \rho = \mathbb{E}_{y \sim \pi_{\theta(0)}(\cdot | x)}\left[ r^*(x, y) \right] + \gamma,
\end{aligned}
\end{equation}

we can find that $\pi_{\theta(t)}(\mathcal{Y}^+|x) \ge \rho$ implies $\mathbb{E}_{y \sim \pi_{\theta(t)}(\cdot | x)}\left[ r^*(x, y) \right] \geq \mathbb{E}_{y \sim \pi_{\theta(0)}(\cdot | x)}\left[ r^*(x, y) \right] + \gamma$. Thus, it suffices to show that \(\pi_{\theta(t)}(\mathcal{Y}^+|x)\) reaches \(\rho\) within a finite time $T'$.

For $t \in [0, T')$ and $y \in \mathcal{Y}^+$, by Lemma~\ref{lemma:2} we know:
\begin{equation}
\begin{aligned}
\frac{d}{dt}\pi_{\theta(t)}(y|x)
&= \pi_{\theta(t)}(y|x)^2\left[r_{\text{covo}}(x, y) - \mathbb{E}_{y \sim \pi_{\theta(t)}(\cdot | x)}[r_{\text{covo}}(x, y)]\right] \\
&\quad\quad - \pi_{\theta(t)}(y|x)\sum_{y' \in \mathcal{Y}}\pi_{\theta(t)}(y'|x)^2\left[r_{\text{covo}}(x, y') - \mathbb{E}_{y \sim \pi_{\theta(t)}(\cdot | x)}[r_{\text{covo}}(x, y)]\right].
%&\quad - \frac{\lambda}{|\mathcal{X}|}\pi_{\theta(t)}(y|x)^2\left[\ln\frac{\pi_{\theta(t)}(y|x)}{\pi_{\theta(0)}(y|x)} - \text{KL}(\theta(t))\right] \\
%&\quad + \frac{\lambda}{|\mathcal{X}|}\pi_{\theta(t)}(y|x)\sum_{y' \in Y}\pi_{\theta(t)}(y'|x)^2\left[\ln\frac{\pi_{\theta(t)}(y'|x)}{\pi_{\theta(0)}(y'|x)} - \text{KL}(\theta(t))\right].
\end{aligned}
\end{equation}

Summing all outputs in $\mathcal{Y}^+$ then derives the following:
\begin{equation}
\label{eq:33}
\begin{aligned}
\frac{d}{dt}\pi_{\theta(t)}(\mathcal{Y}^+|x)
&=  \sum_{y^\gamma \in \mathcal{Y}^+} \pi_{\theta(t)}(y|x)^2\left[r_{\text{covo}}(x, y^\gamma) - \mathbb{E}_{y \sim \pi_{\theta(t)}(\cdot | x)}[r_{\text{covo}}(x, y)]\right] \\
&\quad\quad - \pi_{\theta(t)}(\mathcal{Y}^+|x)\sum_{y' \in \mathcal{Y}}\pi_{\theta(t)}(y'|x)^2\left[r_{\text{covo}}(x, y') - \mathbb{E}_{y \sim \pi_{\theta(t)}(\cdot|x)}[r_{\text{covo}}(x, y)]\right] \\
%&\quad - \frac{\lambda}{|\mathcal{X}|}\pi_{\theta(t)}(\mathcal{Y}^+|x)^2 \left( 2 \ln \frac{1}{\pi_{\theta(0)}(y^m|x)} + \frac{1}{2e} \right) \\
&=  (1-\pi_{\theta(t)}(\mathcal{Y}^+|x)) \sum_{y \in \mathcal{Y}^+} \pi_{\theta(t)}(y|x)^2 \underbrace{\left[r_{\text{covo}}(x, y) - \mathbb{E}_{y \sim \pi_{\theta(t)}(\cdot | x)}[r_{\text{covo}}(x, y)]\right]}_{\texttt{(A)}} \\
&\quad\quad - \pi_{\theta(t)}(\mathcal{Y}^+|x)\sum_{y' \in \mathcal{Y} \backslash \mathcal{Y}^+} \underbrace{\pi_{\theta(t)}(y'|x)^2\left[r_{\text{covo}}(x, y') - \mathbb{E}_{y \sim \pi_{\theta(t)}(\cdot|x)}[r_{\text{covo}}(x, y)]\right]}_{\texttt{(B)}}.
%&\quad - \frac{\lambda}{|\mathcal{X}|}\pi_{\theta(t)}(\mathcal{Y}^+|x)^2 \left( 2 \ln \frac{1}{\pi_{\theta(0)}(y^m|x)} + \frac{1}{2e} \right).
\end{aligned}
\end{equation}

To derive the lower bound of $\frac{d}{dt}\pi_{\theta(t)}(y|x)$, we begin to analyze the terms $A$ and $B$ above.

First, the value of term $A$ is:
\begin{equation}
\begin{aligned}
A& = r_{\text{covo}}(x, y) - \mathbb{E}_{y \sim \pi_{\theta(t)}(\cdot | x)}[r_{\text{covo}}(x, y)] \\
& = r_{\text{covo}}(x, y) \sum_{y' \in \mathcal{Y}} \pi_{\theta(t)}(y'|x) - \sum_{y' \in \mathcal{Y}} \pi_{\theta(t)}(y'|x) \ r_{\text{covo}}(x,y') \\
& = \sum_{y' \in \mathcal{Y}} \pi_{\theta(t)}(y'|x)\left( r_{\text{covo}}(x,y) - r_{\text{covo}}(x, y')\right) \\
%&=\sum_{y' \in \mathcal{Y} \backslash \mathcal{Y}^+} \pi_{\theta(t)}(y'|x)\left( r_C(x,y) - r_C(x, y')\right) \\
&= \sum_{y' \in \mathcal{Y} \backslash \mathcal{Y}^+} \pi_{\theta(t)}(y'|x)\left( r_{\text{covo}}(x,y) - r_{\text{covo}}(x, y')\right).
\end{aligned}
\end{equation}
Through the definitions of $\mathcal{Y}^+$ and $\mathcal{Y}^-$, for any $y' \in \mathcal{Y} \backslash (\mathcal{Y}^+ \cup \mathcal{Y}^-)$, we know that $r_{\text{covo}}(x, y') \le \mathbb{E}_{y \sim \pi_{\theta(0)}(\cdot | x)}\left[ r_{\text{covo}}(x, y) \right]$, which impiles $r_{\text{covo}}(x,y) - r_{\text{covo}}(x, y') \ge r_{\text{covo}}(x,y) - \mathbb{E}_{y \sim \pi_{\theta(0)}(\cdot | x)}\left[ r_{\text{covo}}(x, y) \right]$. Thus we can derive the following inequality:
\begin{equation}
\begin{aligned}
A &= \sum_{y' \in \mathcal{Y} \setminus \mathcal{Y}^+} \pi_{\theta(t)}(y'|x)(r_{\text{covo}}(x,y) - r_{\text{covo}}(x,y')) \\
& = \sum_{y' \in \mathcal{Y}^-} \pi_{\theta(t)}(y'|x)(r_{\text{covo}}(x,y) - r_{\text{covo}}(x,y')) \\
&\quad\quad+ \sum_{y' \in \mathcal{Y} \setminus (\mathcal{Y}^+ \cup \mathcal{Y}^-)} \pi_{\theta(t)}(y'|x)(r_{\text{covo}}(x,y) - r_{\text{covo}}(x,y')) \\
&\ge \sum_{y' \in \mathcal{Y}^-} \pi_{\theta(t)}(y'|x)(r_{\text{covo}}(x,y) - r_{\text{covo}}(x,y')) \\
&\quad\quad + \sum_{y' \in \mathcal{Y} \setminus (\mathcal{Y}^+ \cup \mathcal{Y}^-)} \pi_{\theta(t)}(y'|x)(r_{\text{covo}}(x,y) - \mathbb{E}_{y \sim \pi_{\theta(0)}(\cdot | x)}\left[ r_{\text{covo}}(x, y) \right]) \\
& = \sum_{y' \in \mathcal{Y}^-} \pi_{\theta(t)}(y'|x)( \mathbb{E}_{y \sim \pi_{\theta(0)}(\cdot | x)}[r_{\text{covo}}(x, y)] - r_{\text{covo}}(x,y')) \\
&\quad\quad + (1-\pi_{\theta(t)}(\mathcal{Y}^+|x))(r_{\text{covo}}(x,y) - \mathbb{E}_{y \sim \pi_{\theta(0)}(\cdot | x)}[r_{\text{covo}}(x, y)]).
\end{aligned}
\end{equation}
Since we assume that $\pi_{\theta(t)}(\mathcal{Y}^+|x)$ reaches $\rho$ at timestep $T'$, $\pi_{\theta(t)}(\mathcal{Y}^+|x) < \rho$ for $t \in [0, T')$. We also have $|\mathbb{E}_{y \sim \pi_{\theta(0)}(\cdot | x)}[r_{\text{covo}}(x, y)] - r_{\text{covo}}(x,y')| \leq 1$ for $y \in \mathcal{Y}$, we obtain:
\begin{equation}
\begin{aligned}
A & \ge (1-\rho)(r_{\text{covo}}(x,y) - \mathbb{E}_{y \sim \pi_{\theta(0)}(\cdot | x)}[r_{\text{covo}}(x, y)]) - \pi_{\theta(t)}(\mathcal{Y}^-|x) \\
& = \sigma - \pi_{\theta(t)}(\mathcal{Y}^-|x).
\end{aligned}
\end{equation}
Second, the value of term $B$ is:
\begin{equation}
\begin{aligned}
B &= \pi_{\theta(t)}(y'|x)^2 \left[r_{\text{covo}}(x, y') - \mathbb{E}_{y \sim \pi_{\theta(t)}(\cdot|x)}[r_{\text{covo}}(x, y)]\right] \\
& = \pi_{\theta(t)}(y'|x)^2 \sum_{y \notin \mathcal{Y}^-}\pi_{\theta(t)}(y|x)(r_{\text{covo}}(x, y') - r_{\text{covo}}(x, y)) \\
&\quad\quad + \pi_{\theta(t)}(y'|x)^2 \sum_{y \in \mathcal{Y}^-}\pi_{\theta(t)}(y|x)(r_{\text{covo}}(x, y') - r_{\text{covo}}(x, y)).
\end{aligned}
\end{equation}

According to the definition of $\mathcal{Y}^-$, we have $r_{\text{covo}}(x,y') - r_{\text{covo}}(x,y) = 0$ for any $y \notin \mathcal{Y}^-$ and $r_{\text{covo}}(x, y') - \mathbb{E}_{y \sim \pi_{\theta(t)}(\cdot|x)}[r_{\text{covo}}(x, y)] \leq 1$ for $y \in \mathcal{Y}^-$, the above equation leads to:
\begin{equation}
B \leq \pi_{\theta(t)}(y'|x)^2 \leq \pi_{\theta(t)}(y'|x).
\end{equation}
By deriving the lower bound and upper bound of terms $A$ and $B$ respectively, we obtain the lower bound of Eq.~(\ref{eq:33}):
\begin{equation}
\label{eq:43}
\begin{aligned}
\frac{d}{dt}\pi_{\theta(t)}(\mathcal{Y}^+|x)
& \ge  (1- \rho) \sum_{y \in \mathcal{Y}^+} \pi_{\theta(t)}(y|x)^2 (\sigma - \pi_{\theta(t)}(\mathcal{Y}^-|x))  -  \rho \cdot \pi_{\theta(t)}(\mathcal{Y}^-|x) \\
& \ge \frac{(1- \rho)}{|\mathcal{Y}^+|} (\sigma - \pi_{\theta(t)}(\mathcal{Y}^-|x)) \pi_{\theta(t)}(\mathcal{Y}^+|x)^2  -  \pi_{\theta(t)}(\mathcal{Y}^-|x).
\end{aligned}
\end{equation}
From Lemma~\ref{lemma:2}, we know that
\begin{equation}
\begin{aligned}
\pi_{\theta(t)}(\mathcal{Y}^-|x) \leq \pi_{\theta(0)}(\mathcal{Y}^-|x) \cdot \exp \left( 2T' \right)  = \frac{\sigma (1-\rho) \pi_{\theta(0)}(\mathcal{Y}^+|x)^2}{4|\mathcal{Y}^+|} \leq \frac{\sigma}{4} \leq \frac{\sigma}{2}.
\end{aligned}
\end{equation}

%\begin{equation}
%\frac{d}{dt}\pi_{\theta(t)}(y|x) \geq \frac{1}{|\mathcal{X}|}\pi_{\theta(t)}(y|x)^2\left[r_{RM}(x, y) - V_{RM}(\theta(t); x)\right] - \frac{1}{|\mathcal{X}|}\pi_{\theta(t)}(y|x)A - \frac{\lambda}{|\mathcal{X}|}\pi_{\theta(t)}(y|x)B,
%\end{equation}
%where \(A = \sum_{y' \in Y}\pi_{\theta(t)}(y'|x)^2|r_{RM}(x, y') - V_{RM}(\theta(t); x)|\), \(B = 2\ln\frac{1}{\min_{y \in Y}\pi_{\theta(0)}(y|x)} + \frac{1}{2e}\).

Substituting the above inequality into Eq.~(\ref{eq:43}), we obtain:
\begin{equation}
\label{eq:45}
\frac{d}{dt}\pi_{\theta(t)}(\mathcal{Y}^+|x) \geq \frac{(1-\rho)\sigma}{4|\mathcal{Y}^+|}\pi_{\theta(t)}(\mathcal{Y}^+|x)^2.
\end{equation}

Denote $\pi(t)=\pi_{\theta(t)}(\mathcal{Y}^+\mid x)$, we can rewrite Eq.~(\ref{eq:45}) as:

\begin{equation}
\frac{\mathrm{d}\pi(t)}{\mathrm{d}t} \ge \frac{(1-\rho)\sigma}{4|\mathcal{Y}^+|} \ \pi(t)^2.
\end{equation}

We divide both sides of the equation by $\pi(t)^2$ and get:
\begin{equation}
\frac{1}{\pi(t)^2} \frac{d\pi(t)}{dt} \geq \frac{(1-\rho) \sigma}{4 |\mathcal{Y}^+|} \quad \implies \quad \frac{d}{dt} \left( \frac{1}{\pi(t)} \right) \geq \frac{(1-\rho) \sigma}{4 |\mathcal{Y}^+|}.
\end{equation}

Next, we integrate both sides of the inequality. Assume that $\pi(t)$ evolves from its initial value $\pi(0)$ at time $t = 0$ to the value $\rho$ at some time $t = T'$. The integration becomes:
\begin{equation}
\int_{0}^{T'} \frac{d}{dt}\!\bigl[\pi(t)^{-1}\bigr] \ dt \geq \int_0^{T'} \frac{(1-\rho) \sigma}{4 |\mathcal{Y}^+|} \ dt.
\end{equation}

Substituting the results of both integrals into the inequality, we get:
\begin{equation}
\frac{1}{\pi(0)} - \frac{1}{\rho} \geq \frac{(1-\rho) \sigma}{4 |\mathcal{Y}^+|} T'.
\end{equation}

Finally, solving for $T'$ we obtain:
\begin{equation}
T' \leq \frac{4 |\mathcal{Y}^+|}{(1-\rho) \sigma} \left( \frac{1}{\pi(0)} - \frac{1}{\rho} \right) = \frac{4 |\mathcal{Y}^+|}{(1-\rho) \sigma} \left( \frac{1}{\pi_{\theta(0)}(\mathcal{Y}^+|x)} - \frac{1}{\rho} \right) = \mathcal{O}(\pi_{0}(y^{\gamma} | x)^{-1}).
\end{equation}
This inequality indicates that when $t = T'$, $\pi_{\theta(t)}(\mathcal{Y}^+|x) \geq \rho$, which means the increase by $\gamma$ of the real reward. Then we complete the proof.
\end{proof}
%We denote 
%\begin{equation}
%\frac{d\pi(t)}{dt} \geq \frac{(1-\rho)\delta}{4|\mathcal{X}||Y^+|}\pi(t)^2.
%\end{equation}

%\[
%\int_{\pi(0)}^{\rho} \frac{d\pi}{\pi^2} \geq \frac{(1-\rho)\delta}{4|\mathcal{X}||Y^+|} \int_0^{t_\gamma} dt,
%\]  
%\[
%\left(\frac{1}{\pi(0)} - \frac{1}{\rho}\right) \geq \frac{(1-\rho)\delta}{4|\mathcal{X}||Y^+|} t_\gamma,
%\]    
%\[
%t_\gamma \leq \frac{4|\mathcal{X}||Y^+|}{(1-\rho)\delta}\left(\frac{1}{\pi(0)} - \frac{1}{\rho}\right).
%\]

\section{Algorithm: Reward Computation \label{app:algorithm}}
To facilitate reproducibility, we provide the detailed algorithm for computing intrinsic reward and curiosity reward in Algorithm~\ref{alg:inrinsic} and Algorithm~\ref{alg:curiosity}, respectively.

\begin{algorithm}[!t]
    \renewcommand{\algorithmicrequire}{\textbf{Input:}}
    \renewcommand{\algorithmicensure}{\textbf{Output:}}
    \caption{Intrinsic Reward Computation}
    \begin{algorithmic}[1]
    \label{alg:inrinsic}
        \REQUIRE
        $G$: The number of trajectories in a group with the same answer \\
        $\bm{D_1}, \bm{D_2}, ..., \bm{D_G} \in \mathbb{R}^{T\times K}$: Distance Matrix of all trajectories \\

        \STATE $r_{\text{int}}^{\bm{L}}$ $\leftarrow 0$
        \FOR{$i \leftarrow 0$ {\bf to} $G - 1$}
            
            \STATE $\mathrm{Con} \leftarrow 0$ \\
            \STATE $\mathrm{Vol} \leftarrow 0$ \\
            \STATE $\bm{V_G} \leftarrow \bm{0}$ \\
            \FOR{$t \leftarrow 0$ {\bf to} $T - 1$}
                \IF{$\bm{D}[t,0] == \min \bm{D}[t, :]$}
                    \STATE $\mathrm{Con} \leftarrow \mathrm{Con} + 1/T$
                \ELSE
                    \STATE $\mathrm{Vol} \leftarrow t/T$
                \ENDIF
            \ENDFOR
            \STATE $r_{\text{int}}^{\bm{L}}$ $\leftarrow$ $r_{\text{int}}^{\bm{L}}$ + $\mathrm{Con} - \mathrm{Vol}$ \\
            \STATE $\bm{V_G} \leftarrow \bm{V_G} + [\mathrm{Con} \cdot \cos(\mathrm{Vol}), \mathrm{Con} \cdot \sin(\mathrm{Vol})]$
        \ENDFOR
        \STATE $r_{\text{int}}^{\bm{L}}$ $\leftarrow r_{\text{int}}^{\bm{L}} / G$
        \STATE $r_{\text{int}}^{\bm{V}}$ $\leftarrow \| \bm{V_G} \|_2 / G$
        \ENSURE $r_{\text{int}}^{\bm{L}}$, $r_{\text{int}}^{\bm{V}}$
    \end{algorithmic}  
\end{algorithm}

\begin{algorithm}[!t]
    \renewcommand{\algorithmicrequire}{\textbf{Input:}}
    \renewcommand{\algorithmicensure}{\textbf{Output:}}
    \caption{Curiosity Reward Computation}
    \begin{algorithmic}[1]
    \label{alg:curiosity}
        \REQUIRE
        $T$: The number of intermediate reasoning states in a trajectory \\
        $s_1, s_2, ..., s_T$: All intermediate reasoning states \\

        \STATE $r_\text{cur}$ $\leftarrow 0$
        \FOR{$i \leftarrow 0$ {\bf to} $T - 1$}
            \STATE $\mathrm{Distance} \leftarrow 0$ \\
            \STATE $\mathcal{P} = []$ \\
            \FOR{$t \leftarrow \mathrm{len}(s_i)$ {\bf to} $\mathrm{len}(s_{i+1})$}
                \STATE $\mathrm{logprob} = \log \pi_\theta(s_{i+1}[t] | s_{i+1}[:t])$ \\
                \STATE $\mathrm{Distance} \leftarrow \mathrm{Distance} + \mathrm{logprob}$ \\
                \STATE $\mathcal{P}.\mathrm{append(\mathrm{logprob})}$
            \ENDFOR
            \STATE $\mathrm{Distance} \leftarrow \mathrm{Distance} / (\mathrm{len}(s_{i+1}) - \mathrm{len}(s_{i}))$ \\
            \STATE $\mathrm{KLP} = \ln [\mathrm{KL}(\mathrm{\mathcal{P}, \mathcal{U}})+1]$ \\
            \STATE $r_\text{cur} \leftarrow r_\text{cur} + \mathrm{Distance} - \mathrm{KLP}$
        \ENDFOR
        \STATE $r_\text{cur}$ $\leftarrow r_\text{cur} / T$
        \ENSURE $r_\text{cur}$
    \end{algorithmic}
\end{algorithm}

\section{Datasets Details \label{app:dataset}}
We select six datasets in three different domains for evaluation in our experiments, and list details about our evaluation datasets in Table~\ref{tab:data_detail}. We briefly describe the information of each dataset.

\subsection{Training}
The training dataset of Open-Reasoner-Zero~\citep{hu2025orz} comprises public data from various sources, including AIME (up to 2023), MATH~\citep{hendrycks2021measuring}, Numina-Math collection~\citep{numina_math_datasets}, Tulu3 MATH~\citep{lambert2024t}, OpenR1-Math-220k~\citep{allal2025open} and other opensource datasets. It also includes additional synthesized reasoning tasks using programmatic approaches to augment the dataset. 

\subsection{Evaluation}

\begin{itemize}[leftmargin=*]
    \item Mathematical Domain: \textbf{MATH-500}~\citep{lightman2023let} dataset contains 500 high school–level math problems, covering 7 major areas such as precalculus, algebra, and number theory. \textbf{GSM8K}~\citep{cobbe2021training} contains 1318 elementary school math problems, each requiring 2 to 8 steps to solve and mainly involving basic arithmetic operations. The \textbf{AMC–23}\footnote{https://huggingface.co/datasets/AI-MO/aimo-validation-amc} dataset contains 40 high school–level math competition problems, which are challenging and diverse in form. \textbf{Olympiad Bench}~\citep{he2024olympiadbench} includes challenging math problems sourced from international Olympiads, Chinese Olympiads, and the Chinese College Entrance Examination.
    \item Commonsense Domain:\textbf{ MMLU-Pro}~\citep{mmlupro2024wang} covers a wide range of fields, including natural sciences, social sciences, humanities, and some interdisciplinary content, which can effectively assess both the knowledge base and reasoning abilities of LLMs. \textbf{CommonsenseQA}~\citep{commonqa2019talmor} mainly includes a series of commonsense questions involving various aspects such as daily life, social culture, and natural phenomena, which aim to assess LLM's understanding of commonsense knowledge.
    \item Science Domain: \textbf{GPQA}~\citep{rein2024gpqa} is a multiple-choice, Q\&A dataset of very hard questions written and validated by experts in biology, physics, and chemistry, which measures LLM's knowledge and reasoning capabilities in scientific fields.
\end{itemize}

\begin{table}[htbp]
\vspace{-10pt}
\caption{The detailed information of seven evaluation datasets.}
\vspace{-15pt}
\label{tab:data_detail}
\vspace{6pt}
\begin{center}
\begin{small}
\setlength{\tabcolsep}{6pt}
\renewcommand{\arraystretch}{1.2}
\begin{tabular}{@{\extracolsep{\fill}}crrc}
\toprule
\textbf{Dataset} & \textbf{\# Train} & \textbf{\# Test} & \textbf{\# Domain}\\
\midrule
MATH             & 7500    & 5000 & Mathematics \\
% \midrule
GSM8K            & 7473    & 1319 & Mathematics \\
% \midrule
Olympiad Bench   & -    & 675 & Mathematics \\
% \midrule
AMC              & -    & 40 & Mathematics \\
% \midrule
MMLU-Pro              & -    & 12032 & Commonsense \\
% \midrule
CommonsenseQA         & 9741    & 1140 & Commonsense \\
% \midrule
GPQA         & - & 448 & Science \\
\bottomrule
\end{tabular}
\end{small}
\end{center}
\vspace{-15pt}
\end{table}

\section{Case study \label{app:case}}
\definecolor{wkblue}{rgb}{0.2, 0.3, 0.6}
Here we provide detailed reasoning cases.
\begin{tcolorbox}[breakable, colback=wkblue!10!white, colframe=wkblue!100!blue, left=2mm, right=2mm, title=\small\centering\textcolor{white}{Case study: Problem from MATH-500}]
\begin{small}

\# Problem: The coordinates of a parallelogram are (5, 3), (6, 8), (7, 4) and $(x, y)$ and $x > 7$. What is the value of $x + y$?
\vspace{1.6mm}

\# Output: To determine the coordinates of the fourth vertex of the parallelogram, we can use the property that the diagonals of a parallelogram bisect each other. This means that the midpoint of one diagonal is the same as the midpoint of the other diagonal.\\
The given vertices of the parallelogram are \(A = (5, 3)\), \(B = (6, 8)\), and \(C = (7, 4)\). Let the fourth vertex be \(D = (x, y)\). The midpoint of diagonal \(AC\) is given by:\\\[\left( \frac{5+7}{2}, \frac{3+4}{2} \right) = \left( \frac{12}{2}, \frac{7}{2} \right) = \left( 6, \frac{7}{2} \right)\]\\The midpoint of diagonal \(BD\) is given by:\\\[\left( \frac{6+x}{2}, \frac{8+y}{2} \right)\]\\Since the midpoints of the diagonals are the same, we have:\\\[\left( \frac{6+x}{2}, \frac{8+y}{2} \right) = \left( 6, \frac{7}{2} \right)\]\\This gives us two equations:\\\[\frac{6+x}{2} = 6 \quad \text{and} \quad \frac{8+y}{2} = \frac{7}{2}\]\\Solving the first equation for \(x\):\\\[\frac{6+x}{2} = 6 \implies 6 + x = 12 \implies x = 6\]\\However, we are given that \(x > 7\), so \(x = 6\) is not a valid solution under the given condition. Let's solve the second equation for \(y\):\\\[\frac{8+y}{2} = \frac{7}{2} \implies 8 + y = 7 \implies y = -1\]\\Since \(x > 7\) is a condition that must be satisfied, we need to \textcolor{red}{re-evaluate the problem} with the condition \(x > 7\). The correct approach is to use the vector property of parallelograms. The vector from \(A\) to \(C\) is the same as the vector from \(B\) to \(D\). The vector from \(A\) to \(C\) is:\\\[(7-5, 4-3) = (2, 1)\]\\Therefore, the vector from \(B\) to \(D\) is also \((2, 1)\). Since \(B = (6, 8)\), the coordinates of \(D\) are:\\\[(x, y) = (6+2, 8+1) = (8, 9)\]\\Thus, the value of \(x + y\) is:\\\[x + y = 8 + 9 = 17\]\\The final answer is:\\\[\boxed{17}\]
\end{small}
\end{tcolorbox}

\section{Broader Impact}
Our work introduces CoVo, a novel self-rewarding reinforcement learning framework that enables Large Language Models (LLMs) to improve their reasoning capabilities without relying on human-annotated labels or pretrained reward models. It has the potential to enhance reasoning quality in scenarios with limited labels, such as scientific education, programming, healthcare, and beyond. The intrinsic reward mechanism also encourages models to develop more coherent and logically grounded reasoning processes, which may contribute to improved model interpretability and robustness.

\end{document}